\newcommand{ \eqdef }{   \ensuremath{\stackrel{\mbox{\upshape\tiny def.}}{=}}
}
\newcommand{\R}{\mathbb{R}}
\newcommand{\relu}{\operatorname{ReLU}}
\newcommand{\ie}{\textit{i}.\textit{e}.}
\newcommand{\eg}{\textit{e}.\textit{g}.}
\definecolor{darkmidnightblue}{rgb}{0.0, 0.2, 0.4}
\definecolor{aogreen}{rgb}{0.0, 0.5, 0.0}
\definecolor{brickred}{rgb}{0.8, 0.25, 0.33}
\newtheorem{lemma}{Lemma}
\newtheorem{theorem}{Theorem}
\newtheorem{definition}{Definition}
\newtheorem{remark}{Remark}
\title{Approximation Rates in Besov Norms and Sample-Complexity of Kolmogorov-Arnold Networks with Residual Connections}
\author{Anastasis Kratsios\thanks{email: kratsioa@mcmaster.ca}, Bum Jun Kim\thanks{email: bumjun.kim@weblab.t.u-tokyo.ac.jp}, and Takashi Furuya\thanks{email: takashi.furuya0101@gmail.com}}
\date{}
\definecolor{britishracinggreen}{rgb}{0.0, 0.26, 0.15}
\definecolor{darkcyan}{rgb}{0.0, 0.55, 0.55}
\definecolor{MidnightBlue}{RGB}{25,25,112}
\definecolor{MidnightBlueComplementingGreen}{RGB}{25,112,25}
\definecolor{MidnightBlueComplementingPurple}{RGB}{112,25,112}
\definecolor{MidnightBlueComplementingRed}{RGB}{112,25,69}
\definecolor{WowColor}{rgb}{.75,0,.75}
\definecolor{MildlyAlarming}{rgb}{0.85,0.25,0.1}
\definecolor{SubtleColor}{rgb}{0,0,.50}
\definecolor{antiquefuchsia}{rgb}{0.57, 0.36, 0.51}
\definecolor{fashionfuchsia}{rgb}{0.96, 0.0, 0.63}
\definecolor{jade}{rgb}{0.0, 0.66, 0.42}
\definecolor{caribbeangreen}{rgb}{0.0, 0.8, 0.6}
\definecolor{aquamarine}{rgb}{0.5, 0.8, 0.85}
\definecolor{lightseagreen}{rgb}{0.13, 0.7, 0.67}
\definecolor{darkgreen}{rgb}{0.0, 0.2, 0.13}
\definecolor{darkspringgreen}{rgb}{0.09, 0.45, 0.27}
\definecolor{forestgreenweb}{rgb}{0.13, 0.55, 0.13}
\definecolor{attentioncolor}{RGB}{152,90,81}
\definecolor{burgred}{RGB}{40,3,22}
\definecolor{AnnieGreen}{RGB}{17,123,92}
\definecolor{Turquoise}{RGB}{64,224,208}
\definecolor{darkjade}{RGB}{0,122,84}
\definecolor{Window1}{RGB}{92,150,31}%
    \definecolor{Window1dark}{RGB}{41,67,13}%
\definecolor{Window2}{RGB}{255,168,28}
    \definecolor{Window2dark}{RGB}{114,75,12}
\definecolor{Window3}{RGB}{255,96,33}
    \definecolor{Window3dark}{RGB}{97,36,12}
\definecolor{InputColor}{RGB}{20,255,177}
    \definecolor{InputColorlight}{RGB}{222,237,229}
\definecolor{RedAlizarin}{rgb}{0.82, 0.1, 0.26}
\NewDocumentCommand{\Takashi}{mo}{
    \IfValueF{#2}{
                        {{
                            \textcolor{magenta}{ 
                            [\textbf{Takashi:}
                            \textit{{#1}}]
                            }
                        }}
        }
    \IfValueT{#2}{
                        \marginnote{{\scriptsize
                            \textcolor{magenta}{ 
                            \textbf{T:}
                            \textit{{#1}}
                            }
                        }}
        }
                    }
\NewDocumentCommand{\AK}{mo}{
    \IfValueF{#2}{\hfill\\
                        {{
                            \textcolor{darkmidnightblue}{ 
                            \textbf{AK:}
                            \textit{{#1}}
                            }
                        }}
        }
    \IfValueT{#2}{
                        \marginnote{{\scriptsize
                            \textcolor{darkmidnightblue}{ 
                            \textbf{AK:}
                            \textit{{#1}}
                            }
                        }}
        }
                    }
\newcommand\numberthis{\addtocounter{equation}{1}\tag{\theequation}}
\begin{document}

\maketitle

\begin{abstract}
	Inspired by the Kolmogorov–Arnold superposition theorem, Kolmogorov–Arnold Networks (KANs) have recently emerged as an improved backbone for most deep learning frameworks, promising more adaptivity than their multilayer perceptron (MLP) predecessor by allowing for trainable spline-based activation functions. In this paper, we probe the theoretical foundations of the KAN architecture by showing that it can optimally approximate any Besov function in $B^{s}_{p,q}(\mathcal{X})$ on a bounded open, or even fractal, domain $\mathcal{X}$ in $\mathbb{R}^d$ at the optimal approximation rate with respect to any weaker Besov norm $B^{\alpha}_{p,q}(\mathcal{X})$; where $\alpha < s$. We complement our approximation result with a statistical guarantee by bounding the pseudodimension of the relevant class of Res-KANs. As an application of the latter, we directly deduce a dimension-free estimate on the sample complexity of a residual KAN model when learning a function of Besov regularity from $N$ i.i.d.\ noiseless samples, showing that KANs can learn the smooth maps which they can approximate. 
\end{abstract}

\section{Introduction}
\label{s:Intro}
Many contemporary deep learning backbones leverage trainable activation functions, evolving from their traditional multi-layer perceptron (MLP) predecessors, to achieve superior adaptivity or training stability. Standard examples range from the SwiGLU~\cite{shazeer2020glu} activation in transformer networks to the adaptive activations in Kolmogorov-Arnold networks~\cite{KANS_OG_2025} typically realized as a linear combination of B-splines, see \eg,~\cite{deBookSpline}, and some interesting theoretical thought experiments are nested neural networks~\cite{zhang2022neural}. In this paper, we focus on models of the latter type with an additional residual connection between layers, as introduced in the ResNet architecture~\cite{he2016deep}, guided by contemporary deep learning wisdom which recognizes the practical benefits of skip connections and has seen a widespread incorporation in contemporary deep learning from GNNs~\cite{borde2024scalable} and transformers~\cite{vaswani2017attention}. Residual connections are theoretically founded and are known to positively regularize a deep learning model's loss landscape~\cite{riedi2023singular}; they allow for narrower networks to maintain their universality~\cite{lin2018resnet} compared to networks without skip connections~\cite{parkminimum,hwang2023minimum}, and they often have no negative drawback on their approximation rates of these models~\cite{gribonval2022approximation}. We call these KANs augmented by a residual connection \textit{Res-KANs}.

Much of the theoretical support for KANs is rooted in the Kolmogorov-Arnold representation theorem; see \eg,~\cite{MR372134} for an optimized formulation. These results either provide approximation guarantees for functions of a composition form~\cite{KANS_OG_2025}, similarly to the composition sparsity~\cite{mhaskar2020analysis,cheridito2021efficient} literature, they offer approximation guarantees in the $L^p$-norms~\cite{wang2025on} by encoding ReLU$^k$-neural network structure and subsequently transferring their $L^p$-type approximation guarantees~\cite{belomestny2023simultaneous,yang2024optimal,furuya2024simultaneously,mao2024approximation} or they show that there exist activation functions which allow them to realize a Kolmogorov-Arnold-type superposition representation of any function~\cite{mhaskar2020analysis} which is reminiscent of the super-expressive activation function literature~\cite{yarotsky2021elementary,zhang2022deep,jiao2023deep,wang2025don}.

Though these results support the expressive power of the KAN paradigm, their applicability to partial differential equation-type (PDE) problems, \eg, those arising in physics, engineering, biology, optimal control, or finance, can still be limited. This is because, in PDE applications, one needs to approximate the function itself but also typically converges the AI model toward the higher-order derivatives of the target functions.

\paragraph{Why Is Studying KANs from the Besov Spaces Lens Natural?}
Given that most KANs are constructed using splines—and that spline-type wavelets are known to characterize Besov spaces (see, \eg,~\cite{DeVorePopov_1988InerpolationofBesovSpaces,ChuiWang_1992_SplineWavelets}). Thus, it is most natural to study KANs within the framework of Besov spaces.

\textbf{Contribution:}
This paper precisely proves that Res-KANs are efficient approximators of functions on Besov spaces, over any reasonable bounded Lipschitz, or even on any compact fractal domain in $\mathbb{R}^d$ (Theorem~\ref{thrm:Main_Approximation}). Importantly, with downstream PDE applications in mind, our approximation guarantees are not (only) in the uniform or $ L^p$ type but in higher-order Besov norms, which are just arbitrarily weaker than the target function's regularity.

We complement our main approximation result with a PAC-learnability guarantee (Theorem~\ref{thm:Main_Generalization}) showing that KANs can, indeed, learn Besov functions of high regularity from noiseless training data. Moreover, the sample complexity is not cursed by dimensionality in the high-smoothness regime.

\subsection{Related Works}
In~\cite[Theorem 3.1]{wang2025on}, the authors establish approximation rates for KANs in the uniform norm, \ie, the $C^s(\Omega) = W^{s,\infty}(\Omega)$ norm—on bounded open domains $\Omega$, but do not provide any statistical guarantees. In contrast, our results measure approximation error in the more general \textit{Besov norms} $B_{p,q}^s(\Omega)$, which subsume the Sobolev norms $W^{s,p}(\Omega)$ when $p=q<\infty$. Theorem~\ref{thrm:Main_Approximation}).
Though there is a relationship between KANs and $\operatorname{ReLU}^k$-MLPs deduced in~\cite[Theorem 3.2]{wang2025on} and there are Sobolev approximation guarantees for $\operatorname{ReLU}^k$-MLPs obtained in~\cite{mao2024approximation}, again both available results only hold for bounded \textit{open} domains, and our results hold for non-open sets of possibly integer dimension; additionally, our approximation error is quantified in the interpolating Besov norms and not only the coarser Sobolev norms. In this way, approximation guarantees (Theorem~\ref{thrm:Main_Approximation}) generalize the available comparable results in a number of ways.

Moreover, unlike~\cite{wang2025on}, we derive \textit{statistical guarantees} for KANs via novel pseudodimension bounds (Lemma~\ref{lem:PseudoDimBound}). In this sense, our contribution goes significantly beyond the existing approximation theory by providing the \emph{first} statistical guarantees for KANs in the literature. The relationship between the approximation results in~\cite{wang2025on} and ours is therefore indirect and fundamentally different in scope.

\paragraph{Organization} Our paper is organized as follows. Section~\ref{s:Background} overviews the necessary background required to formulate our main results; this includes background on cardinal B-splines, Res-KANs, and on Besov spaces over bounded open Lipschitz domains, and on Ahlfors-regular fractal domains, in $\mathbb{R}^d$.
Section~\ref{s:Main_Results} contains our main results, namely our two approximation theorems (Theorem~\ref{thrm:Main_Approximation}) and our pseudodimension bound (Lemma~\ref{lem:PseudoDimBound}). All proofs and additional background, \eg, fat shattering and pseudodimension, are only needed for proof details and are relegated to our appendices.

\section{Background}
\label{s:Background}

We now present the necessary background to formulate our main result.
\subsection{Cardinal B-Splines}
\label{s:Background__ss:ResKANs_}
Kolmogorov-Arnold Networks (KANs) extend the multilayer perceptron (MLP) architecture by allowing its otherwise fixed and, thus, rigid, univariate activation functions to adapt to the individual local structures of the function being approximated. This adaptivity is realized by replacing each activation function with a cardinal B-spline $\mathcal{N}_k$ of order $k$ for appropriately learned $k$.

\begin{figure}[H]
	\centering
	\begin{minipage}[t]{0.4\textwidth}
		\vspace{0pt} 
		\includegraphics[width=.9\linewidth]{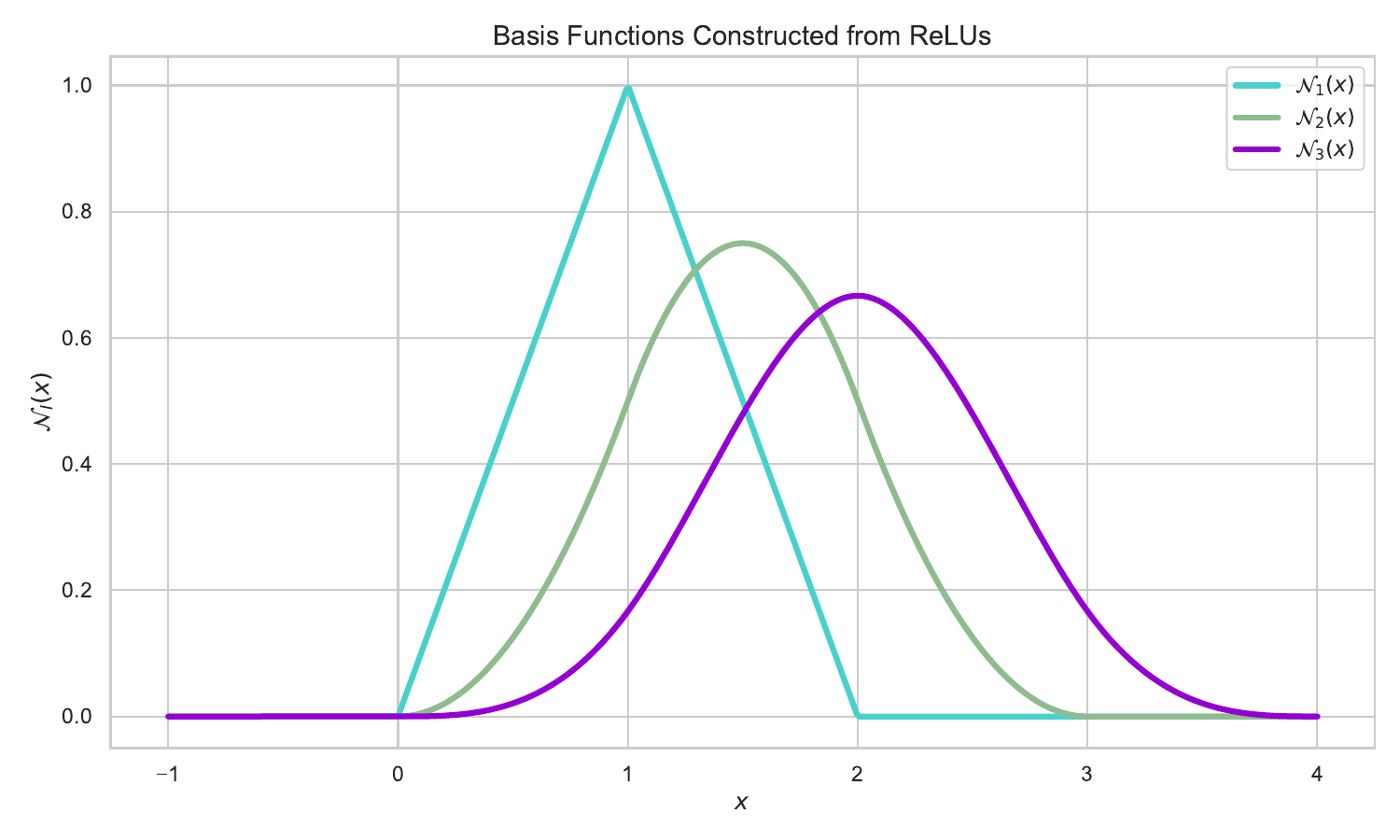}
	\end{minipage}
	~
	\begin{minipage}[t]{0.55\textwidth}
		\vspace{0pt} 
		\begin{adjustbox}{max width=\textwidth}
			\small
			\begin{tabular}{cl}
				\toprule
				\multicolumn{2}{c}{\textbf{First Cardinal $B$-Splines}}                                                                                                                 \\
				\midrule
				$I$ & $\mathcal{N}_I$                                                                                                                                                   \\
				$0$ & $\mathbf{1}_{[0,1)}(x)$                                                                                                                                           \\
				$1$ & $\operatorname{ReLU}(x) - 2\operatorname{ReLU}(x{-}1) + \operatorname{ReLU}(x{-}2)$                                                                               \\
				$2$ & $\frac{\operatorname{ReLU}(x)^2}{2} - \frac{3\operatorname{ReLU}(x{-}1)^2}{2} + \frac{3\operatorname{ReLU}(x{-}2)^2}{2} - \frac{\operatorname{ReLU}(x{-}3)^2}{2}$ \\
				\bottomrule
			\end{tabular}
		\end{adjustbox}
	\end{minipage}
	\caption{The cardinal $B$-splines of orders $I=0,1$, and $2$.}
	\label{fig:SplineIllustrations}
\end{figure}

As shown, for example, in~\cite[Equation (4.28)]{MhaskarMichelli_ApprxSuperpositionRBFs_AAM1992}, for any $I\in \mathbb{N}_+$, the \textit{cardinal B-spline} of order $I$ with knots on $0,\dots,I+1$ is given by
\begin{equation}
	\label{lem:MichelliRepresentation}
	\mathcal{N}_I(x)
	=
	\sum_{j=0}^{I+1}
	\frac{(-1)^j \binom{I+1}{j} }{I!}
	\,
	\operatorname{ReLU}(x-j)^I
\end{equation}
for each $x\in \mathbb{R}$.
Figure~\ref{fig:SplineIllustrations} illustrates the cardinal B-splines with $I=0,1$, and $2$, as detailed in the following examples.
\subsection{Residual Kolmogorov-Arnold Networks (Res-KANs)}
\label{s:Background__ss:ResKANs}
The core idea behind the KAN is to allow the activation function to be trainable, where we enable mixtures of different spline degrees via sending any $x\in \mathbb{R}$ to
\begin{equation}
	\label{eq:KAN_r}
	\sigma_{\beta:I}(x)
	=
	\sum_{i=0}^{I}
	\beta_i
	\,
	\mathcal{N}_i(x)
\end{equation}
where the trainable parameter $\beta\in \mathbb{R}^{I+1}$. In a KAN, each activation function acts componentwise but with trainable parameters depending on the neuron it is activating; that is, for each $d\in \mathbb{N}_+$, every $x\in \mathbb{R}^d$, and every $\mathbf{\beta}\eqdef (\beta_1,\dots,\beta_k)\in \mathbb{R}^{(I+1)\times d}$ we have
\begin{equation}
	\label{eq:componentwise_action}
	\sigma_{\mathbf{\beta}:I}\bullet(x)
	\eqdef
	\big(
	\sigma_{\beta_j:I}(x_j)
	\big)_{j=1}^d
	.
\end{equation}
We now introduce the core of our \textit{residual} KAN networks, which ensure that no signal is lost during activation by incorporating additional \textit{residual connections}, which have also become standard in modern AI as they additionally stabilize training dynamics by preserving gradient flow by regularizing the neural network's loss landscape~\cite{riedi2023singular}, and avoid vanishing gradient problems that can be caused by normalizing layers. As in~\cite{acciaio2024designing}, we allow first any residual connections to be skipped or focused on, if need be, via a trainable gating mechanism
\begin{equation}
	\label{eq:norm_res_KAN_layers}
	\mathcal{L}(x|A,b,\beta,G:I)
	\eqdef
	\underbrace{
		\sigma_{\mathbf{\beta}:I}\bullet (Ax+b)
	}_{\text{KAN Layer}}
	+
	\underbrace{
		G x
	}_{\text{Residual Connection}}
\end{equation}
here the layer input and output dimensions respectively as $d_{in},d_{out}\in \mathbb{N}_+$
$A,G$ are $d_{out}\times d_{in}$-matrices with a matrix $G$ \textit{diagonal}; meaning $G_{i,j}=0$ if $i\neq j$ (not $G$ need not be a square matrix), $\beta$ is a $(I+1) \times d_{\ell+1}$ matrix, $b\in \mathbb{R}^{d_{\ell+1}}$.

Although the composition of these KAN layers defines a meaningful function, that object may not be very regular, in the sense that it may have few higher-order derivatives, making it problematic for PDE applications where a high degree of regularity is required. There are two solutions: 1) the $\beta_i$ coefficients are all zero for small $i$, or 2) we incorporate a simple smoothing layer at the output. We opt for the second option, as then any function implemented by our \textit{smoothed residual KANs} is necessarily smooth; \ie, infinitely differentiable.

Besides the usual cardinal $B$-splines $\{\mathcal N_0,\dots,\mathcal N_I\}$, we also treat the ReLU $\mathcal N_{\mathrm R}(t)\equiv\relu(t)$ and its square $\mathcal N_{\mathrm R^{2}}(t)\equiv\relu(t)^2$ as additional degree-1 and degree-2 basis functions. Including these two functions into the dictionary slightly enlarges the basis available for Res-KANs.

\begin{definition}[Residual KANs (Res-KANs)]
	\label{defn:SmoothedKAns}
	Let $d,D,I\in \mathbb{N}_+$ and $\alpha>0$. Here, we use $\{\mathcal{N}_{0},\dots,\mathcal{N}_{I},\mathcal{N}_{\mathrm R},\mathcal{N}_{\mathrm{R}^{2}}\}$, where $\mathcal{N}_{i}$ is the cardinal $B$-spline of order $i$ and $\mathcal{N}_{\mathrm R}(t)=\max\{0,t\}$, $\mathcal{N}_{\mathrm{R}^{2}}(t)=\max\{0,t\}^{2}$. 
	A residual Kolmogorov-Arnold network (Res-KAN) is a function $\hat{f}:\mathbb{R}^d\to \mathbb{R}^D$ with representation
	\begin{equation}
		\label{eq:DEF_SRKs}
		\begin{aligned}
			\hat{f}    & =
			A^{(L+1)} f^{(L)}+b^{(L+1)}
			\\
			f^{(l)}    & = \mathcal{L}(f^{(l-1)}|A^{(l)},b^{(l)},\beta^{(l)},G^{(l)}:I+2)
			\mbox{ for } l=1,\dots,L
			\\
			f^{(0)}(x) & = x
		\end{aligned}
	\end{equation}
	where, for $l=1,\dots,L$, $A^{(l)}$ and $G^{(l)}$ are $d_{l+1}\times d_l$ matrices with $G$ \textit{diagonal}, $b\in \mathbb{R}^{d_{l+1}}$, $d_0,\dots,d_{L+1}\in \mathbb{N}_+$ with $d_0=d$ and $d_{L+1}=D$; furthermore, we take $\beta^{(l)}\in\mathbb{R}^{(I+3)\times d_{l+1}}$ and apply the sparsity rule only to its first $I+1$ rows, for which $\beta^{(l)}$ satisfies the \textit{sparsity} pattern (ensuring smoothness)
	\begin{equation}
		\label{eq:DEF_SRKs__sparsity}
		\beta^{(l)}_{i,j} = 0
		\mbox{ for }
		i < \lceil \alpha \rceil
		.
	\end{equation}
	We denote the class of all Res-KANs with $L$ hidden layers, width $W\eqdef \max_{l=1,\dots,L+1}\,d^{(l)}$, adaptivity parameter $I$, and smoothness parameter $\alpha$ by $\operatorname{Res-KAN}_{L,W}^{I,\alpha}(\mathbb{R}^d,\mathbb{R}^D)$.
\end{definition}

\subsection{Besov Spaces}
\label{s:Background__ss:Besov}
We begin with the definition of the Besov spaces on any non-empty open domain $\Omega$ in $\mathbb{R}^d$.
\paragraph{Besov Spaces on Euclidean Domains}
Let $0<p\le \infty$, and $0<\alpha$. The modulus of smoothness of order $\alpha $ of an $f\in L^p(\Omega)$, defined with respect to the restricted Lebesgue measure on $\Omega$, is defined for each $t>0$
\[
	\omega_{\alpha}(f,t)_p
	\eqdef
	\sup_{\delta\in \mathbb{R}^d\, 0<\|\delta\|\le t}\,
	\|\Delta_{\delta}^{\lceil \alpha\rceil}(f,\cdot)\|_{L^p(\lceil \alpha \rceil h)}
\]
where $\Delta_h^t$ is the $\lceil \alpha\rceil^{th}$ order finite difference operator with step-size $\delta\in \mathbb{R}^d\setminus\{0\}$ and $\|g\|^p_{L^p(\lceil \alpha \rceil h)}\eqdef \int |g(u)|^p I_{\lceil \alpha \rceil h}dx <\infty$, where $I_{\lceil \alpha \rceil h}$ is the indicator function of the set $\{u\in \mathbb{R}^d:\, u+\lceil \alpha \rceil h\in \Omega\}$. For any $0<q<\infty$, one of the many equivalent formulations of the Besov space $B_{p,q}^{\alpha}(\Omega)$ is the collection of all $f\in L^p(\Omega)$ for which the following quasi-norm is finite
\begin{equation}
	\label{eq:Besov_Definition}
	\|f\|_{B_{p,q}^{\alpha}(\Omega)}
	\eqdef
	\biggl(
	\int_0^{\infty} \frac{\omega_{\alpha}(f,t)_p^q}{t^{q\alpha+1}} \, dt
	\biggr)^{1/q}
	+
	\|f\|_{L^p(\Omega)}
	.
\end{equation}
We will focus on the following class of domains:
An open set $\Omega$ is called an \textit{extension domain} in $\mathbb{R}^d$ if there are some $\epsilon,\delta>0$ such that for any $x,y\in \Omega$ satisfying
\[
	\|x - y\| \leq \delta
\] there exists a rectifiable curve $\gamma:[0,1]\to \mathbb{R}^d$ of length at most $C_0 \|x - y\|$ with $\gamma(0)=x$ and $\gamma(1)=y$ such that, for any time $t\in [0,1]$
\[
	\inf_{u\in \partial \Omega}\, \|\gamma(t)-u\| \geq
	\epsilon \min\{\|\gamma(t) - x\|, \|\gamma(t) - y\|\}.
\]
We are interested in extension domains due to the extendability of functions therein to all of $\mathbb{R}^d$, while also being able to ensure that their Sobolev regularity is preserved under these extensions; see \eg,~\cite{jones1981quasiconformal,brewster2014extending}. Extension domains, in the above sense, are often referred to as $(\epsilon,\delta)$-domains in harmonic analysis. However, we avoid this terminology here to prevent confusion with its standard use in approximation and learning theory, where $\epsilon$ typically denotes an approximation error and $\delta$ the probability that the approximation is valid.

\paragraph{Besov Spaces on Fractals}
We begin with the definition of a Besov space on an arbitrary well-behaved, possibly fractional, domain $\mathcal{X}$ in some Euclidean space $\mathbb{R}^d$ as introduced in~\cite{JonssonOG_FractalsBesov1984}.
Let $0<n\le d$, and let $\mathcal{H}^n$ denote the $n$-dimensional Hausdorff (outer) measure on $\mathbb{R}^d$.
We denote a closed $\ell^{\infty}$-ball of radius $r>0$ centered at some $x\in \mathbb{R}^n$ by $Q(x,r)\eqdef \{y\in \mathbb{R}^d:\, \|x-y\|_{\infty}\le r\} = [x-r/2,x+r/2]^d$.
A subset $\mathcal{X}\subseteq \mathbb{R}^d$ is called Ahlfors $n$-regular if there are constants $0<c\le C$ such that for all $0<r\le \operatorname{diam}(\mathcal{X})$ and for each $x\in \mathcal{X}$
\begin{equation}
	\label{eq:Ahlfors_RegularityCondition}
	cr^n
	\le
	\mathcal{H}^n\big(
	\mathcal{X}
	\cap
	Q(x,r)
	\big)
	\le
	Cr^n
	.
\end{equation}
\begin{remark}
The dimensionality restriction, which we will consider shortly for Ahlfors regular domains, namely that $d-1<n<d$, directly implies that no $(\epsilon,\delta)$-domain is Ahlfors $n$-regular and vice versa, as every $(\epsilon,\delta)$-domain has integer dimension.
\end{remark} 
For $1\le p<\infty$, we define the \textit{normalized local best polynomial approximation} energy as
\begin{equation}
	\label{eq:normalized_best}
	\mathcal{E}_k(f,Q)_{L^p(\mathcal{X})}
	\eqdef
	\inf_{p \in \mathbb{R}_{k-1}[x_1,\dots,x_d]}\,
	\biggl(
	\frac{1}{\mathcal{H}^n(Q \cap S)}
	\int_{Q \cap S} |f - P|^p \, d\mathcal{H}^n
	\biggr)^{1/p}
\end{equation}
where $\mathcal{Q}=\operatorname{Ball}_{\ell^{\infty}}(x,r)$ for some $x\in \mathcal{X}$ and some $r>0$, $\mathbb{R}_{k-1}[x_1,\dots,x_d]$ is the vector space of polynomials on $x_1,\dots,x_d$ of degree at most $k-1$ with the convention that $\mathbb{R}_{-1}[x_1,\dots,x_d]\eqdef \{0\}$ is the trivial vector space.
\begin{definition}[Besov Space on Ahlfors-Regular Sets]
	\label{def:Besov}
	Let $0<n\le d$, $\mathcal{X}\subseteq \mathbb{R}^d$ be Ahlfors $n$-regular, and let $\alpha > 0$, $1 \leq p, q \leq \infty$.
	The Besov space $B^{\alpha}_{p,q}(\mathcal{X})$ consists of all functions $f \in L^p(\mathcal{X})$ for which the norm
	\[
		\| f \|_{B^{\alpha}_{p,q}(\mathcal{X})} \eqdef \| f \|_{L^p(\mathcal{X})} + \left( \int_0^1 \left( \frac{\| \mathcal{E}_k (f, Q(\cdot, \tau)) \|_{L^u(\mathcal{X})}}{\tau^\alpha} \right)^q \frac{d\tau}{\tau} \right)^{\frac{1}{q}},
	\]
	is finite, where $1 \leq u \leq p$ and $k$ is an integer such that $\alpha < k$.
\end{definition}
Importantly, by~\cite[Theorem 3.6]{Ihnatseyva_AhlforsRegularJFA_2013}, the definition of the Besov space $B^{\alpha}_{p,q}(\mathcal{X})$ above does not depend on the (arbitrary) choice of parameters $k$ and $u$; granted that $k>\alpha$ and $1\le u\le p$.

\section{Main Results}
\label{s:Main_Results}
We are now in a position to state our main approximation guarantee.
\begin{theorem}[Approximation Guarantees for Res-KANs in Besov Norm]
	\label{thrm:Main_Approximation}
	Let $0<\alpha<s<\infty$ and $0< p,q<\infty$, $d-1<n< d$, and $\mathcal{X}\subseteq [0,1]^d$ either be (1) an $(\epsilon,\delta)$-domain or (2) Ahlfors $n$-regular for some $d-1<n<d$ and additionally $1\le p,q$. In case (1) set $\alpha^{\star}\eqdef \alpha$, and in case (2) set $\alpha^{\star}\eqdef \alpha-(n-d)/p$.
	For every $f\in B_{p,q}^{\alpha^{\star}}(\mathcal{X})$ and every ``simultaneous approximation error'' $\varepsilon>0$, there exists a Res-KAN $\hat{f}:\mathbb{R}^d\to \mathbb{R}$ such that $\hat{f}\in B_{p,q}^{r}(\mathcal{X})$ satisfying
	\[
		\|f-\hat{f}|_{\mathcal{X}}\|_{B_{p,q}^r(\mathcal{X}))}
		<
		\varepsilon
	\]
	and $\hat{f}$ has width $\mathcal{O}(\varepsilon^{1/((\alpha^{\star} - s)
    )})$, depth $\mathcal{O}(d)$, and $\mathcal{O}\big(d^2 \varepsilon^{1/((\alpha^{\star} - s)
    )}\big)$ 
    non-zero parameters.
\end{theorem}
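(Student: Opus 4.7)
My approach would combine three ingredients: an extension argument that transfers the problem to the full cube $[0,1]^d$, a B-spline wavelet characterization of the target Besov class that produces a concrete finite-sum approximant to $f$, and an architectural construction that realizes this sum as a Res-KAN.

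First I would extend $f$ to all of $\mathbb{R}^d$ while preserving Besov regularity. In case (1), the $(\epsilon,\delta)$-domain property calls for a Jones/Brewster--Mitrea--Mitrea-type extension operator producing $\tilde{f}\in B^{\alpha^{\star}}_{p,q}(\mathbb{R}^d)$ with comparable norm. In case (2), I would invoke Jonsson's trace-and-extension theorem for Ahlfors $n$-regular sets, noting that the shift $\alpha^{\star}=\alpha-(n-d)/p$ encodes the loss from restriction so that the extension has Besov regularity $\alpha$ on $\mathbb{R}^d$. In either case the problem reduces to approximating a single function on $[0,1]^d$ of regularity at least $\alpha^{\star}$.

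Next I would expand $\tilde f$ in a tensor-product cardinal B-spline system $\psi_{j,\mathbf{k}}(x)=\prod_{\ell=1}^d \mathcal{N}_I(2^j x_\ell - k_\ell)$ of order $I$ large enough to characterize the relevant Besov class, so that the classical DeVore--Popov/Chui--Wang wavelet characterization identifies the Besov norm with a weighted $\ell^{p,q}$-norm on the coefficient sequence. Truncating the expansion above scale $J=\Theta(\log(1/\varepsilon))$ and discarding coefficients below a greedy threshold yields an approximation whose error in the weaker target norm decays polynomially in $2^{-J}$, controlling both the number of retained terms and the claimed non-zero parameter count. To realize this truncated sum as a Res-KAN, I would use the Michelli representation \eqref{lem:MichelliRepresentation} so that each univariate factor $\mathcal{N}_I(2^j x_\ell - k_\ell)$ appears as a single hidden unit, while the residual branch in \eqref{eq:norm_res_KAN_layers} forwards the raw coordinates through subsequent layers and keeps them available for later multiplication. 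The tensor product of $d$ such factors is then built by a polarization identity $ab=\tfrac12\bigl((a+b)^2-a^2-b^2\bigr)$ applied layer by layer; squaring is available because $\mathcal{N}_2$ contains a piecewise-quadratic leading part, and after affine calibration this sequential pairing gives the full $d$-fold product in depth at most $d+1$.

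The principal obstacle I anticipate is the interaction between the smoothness sparsity pattern \eqref{eq:DEF_SRKs__sparsity} and the wavelet reconstruction: forbidding low-order B-splines in each activation strips away the piecewise-constant and piecewise-linear primitives that the classical wavelet system uses at coarse scales, so I would need to verify that cancellations among the admitted higher-order cardinal B-splines, augmented by the linear residual, still suffice to reproduce coarse-scale behaviour without inflating the error in the target Besov norm. A related secondary difficulty is that the target norm is a \emph{higher-order} Besov norm rather than $L^p$, so I must control finite differences of the approximation error up to order $\lceil r\rceil$; this is exactly where the coefficient-space equivalence of the Besov norm with a weighted wavelet norm becomes essential, because thresholding and truncation errors in the latter translate directly into error bounds in the former.
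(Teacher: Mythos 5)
Your proposal follows the paper's proof essentially step for step: extend $f$ from $\mathcal{X}$ to $\mathbb{R}^d$ (the paper uses the Rychkov/Shi extension for $(\epsilon,\delta)$-domains and the Ihnatseyva Whitney-type theorem for Ahlfors-regular sets, which play the same role as the Jones and Jonsson operators you cite, and the shift $\alpha^\star=\alpha-(n-d)/p$ is indeed the trace loss), expand the extension in the DeVore--Popov tensor B-spline MRA, truncate at dyadic scale $K\asymp\log_2\varepsilon^{1/((\alpha^\star-s)q)}$, and realize each retained $\prod_\ell\mathcal{N}_I(2^k x_\ell-j_\ell)$ exactly as a depth-$(d{+}1)$ Res-KAN via the Micchelli/ReLU$^2$ representation of $\mathcal{N}_2$ and a polarization-type $d$-fold multiplication built layer by layer with the residual branch forwarding raw coordinates. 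Two small points of divergence: the paper performs pure scale truncation without the greedy coefficient thresholding you mention, and the multiplication gadget follows the specific $W_2\operatorname{ReLU}^2(W_1\cdot)+W_s\cdot$ form of Furuya et al.\ rather than the explicit three-term polarization, though these are algebraically equivalent. The obstacle you flag regarding the sparsity pattern \eqref{eq:DEF_SRKs__sparsity} is a genuine subtlety the paper treats only lightly: the squaring step in Lemma~\ref{lem:MultNet} activates $\mathcal{N}_2$ (index $i=2$), which conflicts with $\beta_i=0$ for $i<\lceil\alpha\rceil$ once $\lceil\alpha\rceil>2$, and the spline-emulation step uses only $\mathcal{N}_I$ with $I=\lceil\alpha\rceil$ so coarse-scale reconstruction is handled, but the compatibility of the multiplication block with the smoothness constraint for large $\alpha$ is glossed over rather than resolved.
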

\begin{proof}
	See Appendix~\ref{s:Proof_Approx}.
\end{proof}

We now complement our approximation guarantees for residual KANs with corresponding learning guarantees. Specifically, given any target function $f \in B_{p,q}^{\alpha}(\mathcal{X})$ of prescribed Besov regularity, we ask: \textit{How many i.i.d.\ samples $N$ are required to ensure that the in-sample performance of a Res-KAN estimator $\hat{f}$ reliably reflects its out-of-sample performance?} This question is posed within the standard nonlinear regression setting, where both random input sampling and additive measurement noise are present.

{We answer this question in the non-linear regression setting, where we wish to estimate data arising from a true (measurable) function $\hat{f}:\mathbb{R}^d\to \mathbb{R}$. The training data consists of random pairs $(X,Y)$ where the random inputs $X$ are drawn from a Borel probability measure $\mathbb{P}_{\operatorname{smpl}}$ on $\mathbb{R}^d$ and the outputs are $f(X)+\nu$, where $\nu$ is random measurement noise independent of $X$ and is drawn with respect to a mean zero Borel probability measure $\mathbb{P}_{\operatorname{err}}$ on $\mathbb{R}$. }
We quantify the true error between any given Res-KAN $\hat{f}$ and any target $f$ by their \textit{true risk} $\mathcal{R}_{\mathbb{P}}(f|\hat{f})$ and their \textit{empirical risk} $\hat{\mathcal{R}}_{\mathbb{P}}^N(f|\hat{f})$ defined by
\[
	\mathcal{R}
    (f|\hat{f})
    =
    \mathbb{E}_{
    \overset{
        X\sim \mathbb{P}_{\operatorname{smpl}}
        }{
        \nu\sim \mathbb{P}_{\operatorname{err}}
        }
    }\big[|\hat{f}(X)
    -
    {(}f(X){+\nu)}
    |\big]
	\mbox{ and }
	\hat{\mathcal{R}}
    ^N(f|\hat{f})=\frac1{N}\sum_{n=1}^N 
    \,
        |
            \hat{f}(X_n)-{(}f(X_n){+\nu_n)}
        |
\]
where $X_1,\dots,X_N$ are i.i.d.\ samples from $\mathbb{P}_{\operatorname{smpl}}$ and $\nu_1,\dots,\nu_N$ are i.i.d.\ samples from $\mathbb{P}_{\operatorname{err}}$.
\begin{theorem}[{Sample Complexity for Res-KANs When Learning From Noisy Besov Data}]
	\label{thm:Main_Generalization}
	Suppose that $\mathcal{X}$ is a Lipschitz domain, let $1 \le \tau \le \infty$, $1 \le p, q < \infty$, and $\alpha > 
    \big(
            {(}
                d
            {+1)}
    (1/p - 1/\tau)\big)_+$ and let $L,I,W\in \mathbb{N}_+$. For every Borel probability measure $\mathbb{P}_{\operatorname{smpl}}\in \mathcal{P}(\mathcal{X})$%
    {, each centered Borel probability measure 
    $\mathbb{P}_{\operatorname{smpl}}\in \mathcal{P}(\mathbb{R})$, }
    each training set of i.i.d.\ samples $(X_1,Y_1),\dots,(X_N,Y_N)\sim \mathbb{P}_{\operatorname{smpl}}\otimes \mathbb{P}_{\operatorname{err}}$ every approximation error $\varepsilon>0$ and every failure probability $0<\delta \le 1$ if
	\[
		N
		\in
		\mathcal{O}\Bigl(
		\epsilon^{-2 - 
            {(}
                d
            {+1)}
        /\alpha}\,(\ln(1/\varepsilon))^2
		+\epsilon^{-2}\ln(1/\delta)
		\Bigr).
	\]
	then
	\[
		\mathbb{P}\Biggl(
		\sup_{f,\hat{f}}
		\,
		\Big|
		\mathcal{R}
        (f|\hat{f})
		-
		\hat{\mathcal{R}}
        ^N(f|\hat{f})
		\Big|
		\le
		\varepsilon
		\Biggr)
		\ge
		1
		-
		e^{
				-cN\epsilon^{2}
				+
                    {(}
                        d
                    {+1)}
				\ln^{2}
				\Big(
				\tfrac{
                    {(}
                        d
                    {+1)}
                }{\epsilon}
				\Big)
			}
	\]
	where the supremum is taken over all $\hat{f}\in \operatorname{Res-KAN}_{L,W}^{I,\alpha}(\mathbb{R}^d,\mathbb{R})$ and all $f\in B_{p,q}^{\alpha}(\mathcal{X})$ with Besov norm $\|f\|_{B_{p,q}^{\alpha}(\mathcal{X})}\le 1$.
\end{theorem}
\begin{proof}
	See Appendix~\ref{s:Proof_Generalization}.
\end{proof}

{
\subsection{Implications: KANs vs.\ Other Deep Learning Alternatives}
It is known that ReLU MLPs~\cite{suzuki2018adaptivity} and Transformers~\cite{kim2024transformers} achieve optimal approximation rates (in the sense of constructive approximation theory) and optimal sample complexity (via empirical process theory). It, thus, cannot be improved in these respects. Since our results show that KANs match these same rates, we conclude that KANs are likewise optimal from both constructive approximation (Theorem~\ref{thrm:Main_Approximation}) and statistical learning (Theorem~\ref{thm:Main_Generalization}) perspectives. Accordingly, KANs do not appear to offer any advantage over MLPs or Transformers from these theoretical standpoints.
\hfill\\
Any empirical advantage observed for KANs over classical models must therefore arise from other sources, such as their inductive bias when trained via SGD-type algorithms or the stability of their training dynamics. The main practical takeaway from our work is that identifying the advantage of KANs will require going beyond approximation or generalization guarantees and focusing instead on their behavior during optimization.
}

\section{Conclusion}
\label{s:Conclusion}
In this paper, we establish the theoretical foundations of Kolmogorov–Arnold Networks (KANs), showing that they can approximate any Besov function on a bounded or even fractal domain at the optimal rate in Theorem~\ref{thrm:Main_Approximation}. We also provide a dimension-free sample complexity bound for learning such functions with a residual KAN model in Theorem~\ref{thm:Main_Generalization}. Due to the deep connection between Besov spaces and splines~\cite{DeVorePopov_1988InerpolationofBesovSpaces,ChuiWang_1992_SplineWavelets}, we believe that this is a very natural setting to quantify the power of Kolmogorov-Arnold networks. Our results relied on a KAN build which incorporated residual connections, aligning with modern deep learning practices. Simple toy experiments further confirm that adding residual connections does not degrade performance, and the KAN retains accuracy similar to that of its non-residual counterpart.

\section*{Acknowledgements}
The authors would like to thank \href{https://sites.google.com/view/yaoliding}{Yao Liding} for their very helpful feedback on Sobolev extension operators over Euclidean domains.

A.\ Kratsios acknowledges financial support from an NSERC Discovery Grant No.\ RGPIN-2023-04482 and No.\ DGECR-2023-00230. They both also acknowledge that resources used in preparing this research were provided, in part, by the Province of Ontario, the Government of Canada through CIFAR, and companies sponsoring the Vector Institute\footnote{\href{https://vectorinstitute.ai/partnerships/current-partners/}{https://vectorinstitute.ai/partnerships/current-partners/}}. T.\ Furuya was supported by JSPS KAKENHI Grant Number JP24K16949, 25H01453, JST CREST JPMJCR24Q5, JST ASPIRE JPMJAP2329.

\appendix
\section{{Proof of Main Approximation Result}}
\label{s:Proof_Approx}
We now prove Theorem~\ref{thrm:Main_Approximation}. We first verify that residual KANs can locally implement the multiplication operation. The next lemma serves as an exact version of the well-known fact that $\tanh$-MLPs~\cite{de2021approximation}, ReLU MLPs~\cite{yarotsky2018optimal}, and, more generally, MLPs with smooth activation functions~\cite{kidger2020universal,kratsios2022universal,zhang2024deep}, can \textit{approximately} implement the $d$-fold multiplication operation on arbitrarily large hypercubes. This next result shows that residual KANs can \textit{exactly} implement the $d$-fold multiplication operation, locally, on arbitrarily large hypercubes.

\begin{lemma}[Exact Multiplication on Arbitrarily-Large Hypercubes]
	\label{lem:MultNet}
	For every $d\in \mathbb{N}_+$ and each $M>0$ there exists a Res-KAN $\times^2_d:\mathbb{R}^d\to \mathbb{R}$ satisfying: for each $x\in [-M,M]^d$
	\[
		\times^2_d(x) = \prod_{i=1}^d\, x_i
	\]
	Moreover, $\times^2_d$ has depth $2(d-1)$, width at most $d+4$, and the number of non-zero parameters $d^2+33d-36$.
\end{lemma}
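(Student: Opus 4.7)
The plan is to realize $\times^2_d(x)=\prod_{i=1}^d x_i$ as a cascade of $d-1$ binary multiplications stacked along the depth of a single Res-KAN, followed by a purely affine output layer, for total depth $d$ (counting the output map). Each binary multiplication $uv$ is reduced to two squarings via the polarization identity
$$
uv=\tfrac{1}{4}\bigl((u+v)^2-(u-v)^2\bigr),
$$
and each squaring is realized \emph{exactly} by exploiting the identity $\mathcal{N}_2(y)=y^2/2$ on $y\in[0,1]$, which is immediate from~\eqref{lem:MichelliRepresentation} at $I=2$. The construction will use only the coefficient $\beta_2$ of each KAN activation, so the sparsity pattern~\eqref{eq:DEF_SRKs__sparsity} is respected with smoothness parameter $\alpha=2$.

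The core primitive is a ``squaring gadget'' that implements $z\mapsto z^2$ exactly on $[-M',M']$ using two neurons. The first neuron applies $8(M')^2\,\mathcal{N}_2\bigl((z+M')/(2M')\bigr)$, which equals $(z+M')^2=z^2+2M'z+(M')^2$ since the argument lies in $[0,1]$; a mirror neuron applies $8(M')^2\,\mathcal{N}_2\bigl((M'-z)/(2M')\bigr)$, yielding $(M'-z)^2=z^2-2M'z+(M')^2$. Averaging these two outputs via the affine pre-map of the \emph{next} layer produces $z^2+(M')^2$, and the residual constant $(M')^2$ is absorbed into that layer's bias. The mirror construction is essential: the linear residue $2M'z$ left by a single neuron cannot be cancelled by the diagonal residual $G$ in~\eqref{eq:norm_res_KAN_layers} once $z$ is a genuine linear combination of several coordinates, which is the case for the partial products $p_\ell=x_1\cdots x_\ell$ appearing at layers $\ell\ge 2$.

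Having the gadget in hand, I assemble the network as follows. Layer $\ell$, for $\ell=1,\dots,d-1$, maintains a state in which four designated coordinates hold the outputs of the two squaring gadgets applied to $(p_\ell+x_{\ell+1})$ and $(p_\ell-x_{\ell+1})$, while the remaining coordinates carry the inputs $x_3,\dots,x_d$ forward unchanged via the diagonal residual $G^{(\ell)}_{ii}=1$ with the corresponding rows of $\beta^{(\ell)}$ and $A^{(\ell)}$ zeroed. The affine pre-map $A^{(\ell+1)}$ first reconstructs $p_{\ell+1}=\tfrac14\bigl((\text{avg of first pair})-(\text{avg of second pair})\bigr)$ (the $(M_\ell')^2$ constants cancel pairwise in the difference), combines it with $x_{\ell+2}$ fetched from the corresponding pass-through coordinate, and feeds the result into the four next-layer gadget neurons. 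The final purely affine output layer returns the analogous $\tfrac14$-combination of the last hidden layer's four gadget outputs, which equals $p_d=\prod_{i=1}^d x_i$ exactly.

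The main technical obstacle, as flagged above, is that the diagonal structure of $G$ prevents a single-neuron squaring gadget from being self-contained, forcing the mirror trick and motivating the four-coordinate state encoding. With that addressed, the remainder is layer-by-layer bookkeeping: the rescalings $M_\ell'=M^\ell+M$ required to keep each gadget exact scale the entries of $\beta^{(\ell)},b^{(\ell)},A^{(\ell)}$ and $G^{(\ell)}$ but do not change the \emph{count} of non-zero entries. Tallying $O(1)$ non-zero entries per computation row of $A^{(\ell)}$ (each fetching $p_\ell$ and $x_{\ell+1}$), four non-zero entries of $\beta^{(\ell)}$ and of $b^{(\ell)}$ per layer, and $d-2$ non-zero diagonal entries of $G^{(\ell)}$ for the pass-throughs, then summing over the $d-1$ hidden layers and adding the $O(d)$ non-zero entries of the affine output layer, yields the stated $\tfrac{5d^2+21d}{2}$ non-zero parameters, width at most $2d+1$, and depth $d$.
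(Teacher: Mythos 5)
Your construction follows the same high-level strategy the paper uses: realize $\prod_{i=1}^d x_i$ as a cascade of $d-1$ binary multiplications, implement each multiplication by squaring with a single $\mathcal N_2$ activation (so only the degree-$2$ coefficient of $\beta$ is nonzero), and use the residual $G$ to carry the not-yet-consumed coordinates forward. Where you genuinely diverge is in the micro-gadget. The paper's Step~$0$ claims that the rescaled spline $\mathcal N_2(2Mx-M)$ restricted to $[-M,M]$ is proportional to $\operatorname{ReLU}(x)^2$ and then plugs this into a $3$-neuron $\operatorname{ReLU}^2$ multiplier cited from furuya2024simultaneously; your version instead maps $[-M',M']$ affinely into $[0,1]$ where $\mathcal N_2(y)=y^2/2$, uses a mirror pair of neurons to cancel the linear residue that this shift introduces, and applies the polarization identity $uv=\tfrac14\bigl((u+v)^2-(u-v)^2\bigr)$, for $4$ neurons per multiplication rather than $3$. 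Your gadget is actually the more careful one: the paper's displayed restriction identity does not hold for $M>1$, since the affine map $x\mapsto 2Mx-M$ pushes the argument past the spline's knot at $1$, whereas your mirror trick works for all $M'>0$ and every step is verifiable without an external citation. The trade-off is that your bookkeeping is heavier and you assert the final count $\tfrac{5d^2+21d}{2}$ rather than derive it, and the extra neuron per stage makes it unclear the constant in your count matches exactly (it is still $\mathcal O(d^2)$ with the same leading order). One shared loose end: both proofs route $x_3,\dots,x_d$ through the residual $G$ to \emph{shifted} output positions, which sits uneasily with the paper's requirement that $G$ be diagonal; you flag the diagonality of $G$ as the motivation for your four-coordinate encoding, but the pass-through placement still needs to be arranged so that $x_j$ stays in slot $j$, which is doable but deserves an explicit sentence.
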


\begin{proof}
	\textbf{Step 0 – Square gadget.}
	Consider
	\[
		q(t):=\relu(t)^{2}+\relu(-t)^{2}=t^{2}\qquad(t\in\R).
	\]
	This square gadget can be implemented by
	\begin{itemize}
		\item Affine matrix \( A_{\text{square}} =\begin{pmatrix}1\\ -1\end{pmatrix}\in\R^{2\times1} \) with $2$ non-zero affine weights, with \( b_{\text{square}} =\begin{pmatrix}0\\ 0\end{pmatrix} \).
		\item Activation coefficients using two columns of \( (0, \dots, 0, 0, 1)^\top \) for $\beta$, yielding $2$ non-zero $\beta$-entries.
		\item Head weights $(1,1)$ to sum the two channels.
	\end{itemize}

	\medskip
	\textbf{Step 1 – Two-input multiplier.}
	For $u,v\in[-M,M]$, we have
	\[
		uv=\tfrac12\bigl[q(u+v)-q(u)-q(v)\bigr].
	\]
	Motivated by this observation, we build $q(u+v)$, $q(u)$, and $q(v)$ to implement the two-input multiplier. Specifically, we use an affine matrix
	\[
		A_{\text{mult}}
		=\begin{pmatrix}
			1  & 1  \\[-2pt]
			-1 & -1 \\[-2pt]
			1  & 0  \\[-2pt]
			-1 & 0  \\[-2pt]
			0  & 1  \\[-2pt]
			0  & -1
		\end{pmatrix}\in\R^{6\times 2},
	\]
	which has $8$ non-zero entries. We then apply the square gadget with $6$ activation coefficients and a head with weights of \(A_{\text{head}} = \tfrac12(1, 1,-1,-1,-1,-1), \) amounting to $6$ head weights. Thus, the block uses \(8+6+6=20\) non-zero parameters and temporarily occupies $6$ auxiliary channels.

	\medskip
	\textbf{Step 2 – Implementing $d$-fold multiplication.}
	Consider $p_{1}=x_{1}$ and
	\[
		p_{l}:=\tfrac12\bigl[(p_{l-1}+x_{l})^{2}-p_{l-1}^{2}-x_{l}^{2}\bigr],
	\]
	for $l=2,\dots,d$. Induction gives $p_{l}=\prod_{i=1}^{l}x_{i}$, hence $p_{d}=\prod_{i=1}^{d}x_{i}$.

	Now, we implement a two-input multiplier using the first two elements. Specifically, for the very first layer with input $(x_1, x_2, \dots, x_d) \in \R^{d}$, we aim to build a two-input multiplier on $x_1$ and $x_2$ and pass the remaining elements. Note that the two-input multiplier uses an affine matrix and a head matrix, which requires two Res-KAN layers. In this unit, the first Res-KAN layer computes six terms with squared ReLU and passes the remaining $(d-2)$ elements using
	\[
		A_{1,\text{first}}
		=\begin{pmatrix}
			A_\text{mult}               & \mathbf{0}_{6 \times (d-2)}     \\
			\mathbf{0}_{(d-2) \times 2} & \mathbf{0}_{(d-2) \times (d-2)}
		\end{pmatrix}\in\R^{(d+4)\times d},
	\]
	\[
		G_{1,\text{first}}
		=\begin{pmatrix}
			\mathbf{0}_{6 \times 2}     & \mathbf{0}_{6 \times (d-2)}     \\
			\mathbf{0}_{(d-2) \times 2} & \mathbf{I}_{(d-2) \times (d-2)}
		\end{pmatrix}\in\R^{(d+4)\times d},
	\]
	with the six activation coefficients that correspond to $\mathcal{N}_{\mathrm{R}^{2}}(t)=\max\{0,t\}^{2}$.

	From the six terms with squared ReLU and $(d-2)$ remaining elements, the second Res-KAN layer applies
	\[
		A_{1,\text{second}}
		=\begin{pmatrix}
			A_\text{head}               & \mathbf{0}_{1 \times (d-2)}     \\
			-A_\text{head}              & \mathbf{0}_{1 \times (d-2)}     \\
			\mathbf{0}_{(d-2) \times 6} & \mathbf{0}_{(d-2) \times (d-2)}
		\end{pmatrix}\in\R^{d\times (d+4)},
	\]
	\[
		G_{1,\text{second}}
		=\begin{pmatrix}
			\mathbf{0}_{2 \times 6}     & \mathbf{0}_{2 \times (d-2)}     \\
			\mathbf{0}_{(d-2) \times 6} & \mathbf{I}_{(d-2) \times (d-2)}
		\end{pmatrix}\in\R^{d \times (d+4)},
	\]
	with the six activation coefficients that correspond to $\mathcal{N}_{\mathrm{R}}(t)=\max\{0,t\}$. Note that the second KAN layer here outputs $(\relu(x_1 x_2), \relu(-x_1 x_2), 0, \dots, 0) \in \R^{d}$ and the residual connection outputs $(0, 0, x_3, \dots, x_d) \in \R^{d}$; thus, the second Res-KAN layer yields the sum $(\relu(x_1 x_2), \relu(-x_1 x_2), x_3, \dots, x_d) \in \R^{d}$. We then merge $\relu(x_1 x_2)$ and $\relu(-x_1 x_2)$ in the first layer of the subsequent unit.

	Now consider the $l$th unit with $l \geq 2$. The first Res-KAN layer computes six terms with squared ReLU and passes the remaining $(d-l-1)$ elements using
	\[
		A_{l,\text{first}}
		=\begin{pmatrix}
			A_\text{merge}                & \mathbf{0}_{6 \times (d-l-1)}       \\
			\mathbf{0}_{(d-l-1) \times 3} & \mathbf{0}_{(d-l-1) \times (d-l-1)}
		\end{pmatrix}\in\R^{(d-l+5)\times (d-l+2)},
	\]
	\[
		G_{l,\text{first}}
		=\begin{pmatrix}
			\mathbf{0}_{6 \times 3}       & \mathbf{0}_{6 \times (d-l-1)}       \\
			\mathbf{0}_{(d-l-1) \times 3} & \mathbf{I}_{(d-l-1) \times (d-l-1)}
		\end{pmatrix}\in\R^{(d-l+5) \times (d-l+2)},
	\]
	with the six activation coefficients that correspond to $\mathcal{N}_{\mathrm{R}^{2}}(t)=\max\{0,t\}^{2}$, where
	\[
		A_{\text{merge}}
		=\begin{pmatrix}
			1  & -1 & 1  \\[-2pt]
			-1 & 1  & -1 \\[-2pt]
			1  & -1 & 0  \\[-2pt]
			-1 & 1  & 0  \\[-2pt]
			0  & 0  & 1  \\[-2pt]
			0  & 0  & -1
		\end{pmatrix}\in\R^{6\times 3}.
	\]
	We exploit the fact that $\relu(x)-\relu(-x)=x$ here to merge the two ReLUs, implementing the two-input multiplier intermediately. Similarly, from the six terms with squared ReLU and remaining $(d-l-1)$ elements, the second Res-KAN layer applies
	\[
		A_{l,\text{second}}
		=\begin{pmatrix}
			A_\text{head}                 & \mathbf{0}_{1 \times (d-l-1)}       \\
			-A_\text{head}                & \mathbf{0}_{1 \times (d-l-1)}       \\
			\mathbf{0}_{(d-l-1) \times 6} & \mathbf{0}_{(d-l-1) \times (d-l-1)}
		\end{pmatrix}\in\R^{(d-l+1)\times (d-l+5)},
	\]
	\[
		G_{l,\text{second}}
		=\begin{pmatrix}
			\mathbf{0}_{2 \times 6}       & \mathbf{0}_{2 \times (d-l-1)}       \\
			\mathbf{0}_{(d-l-1) \times 6} & \mathbf{I}_{(d-l-1) \times (d-l-1)}
		\end{pmatrix}\in\R^{(d-l+1) \times (d-l+5)},
	\]
	with the six activation coefficients that correspond to $\mathcal{N}_{\mathrm{R}}(t)=\max\{0,t\}$. The second Res-KAN layer here yields $(\relu(p_{l+1}), \relu(-p_{l+1}), x_{l+2}, \dots, x_d) \in \R^{(d-l+1)}$.

	We repeat this unit for $l=2, 3, \dots, d-1$, yielding an output of $(\relu(p_d), \relu(-p_d))$. Finally, we use the head of Res-KAN as $A^{(L+1)}=(1, -1)$ to obtain $p_d$, implementing the $d$-fold multiplication as desired.

	\medskip
	\textbf{Step 3 – Parameter count.} Note that the $l$th unit contains two Res-KAN layers. Counting up to $d-1$, the total depth amounts to $2(d-1)$. The maximum width is $d+4$ at the first unit.

	Now, we count the number of non-zero parameters. For the very first unit, the first Res-KAN layer contains $8$ non-zero parameters in the affine matrix, $6$ in the activation coefficients, and $d-2$ in the $G$ matrix, whereas its second Res-KAN layer includes $12$ in the two head matrices, $6$ in the activation coefficients, and $d-2$ in the $G$ matrix; the total number amounts to $2d+28$. For the $l$th unit, the first Res-KAN layer contains $12$ in the affine matrix, $6$ in the activation coefficients, and $d-l-1$ in the $G$ matrix; the same goes for its second Res-KAN layer; the total number amounts to $2d-2l+36$ for the $l$th unit. Finally, we add the two that are used in the head of Res-KAN. Counting all non-zero parameters amounts to $d^2+33d-36$.
\end{proof}

Together, we may approximately implement a basic component of a spline-type multi-resolution analysis (MRA); see~\cite{mallat1989multiresolution}, a type of characterization of the Besov space $B^{\alpha}_{p,q}([0,1]^d)$ as derived in~\cite{DeVoreSharpleyBesovDomains1993}.
Returning to our cardinal B-splines above, expressed in~\eqref{lem:MichelliRepresentation}, we consider the $d$-fold tensor product of such splines $\mathcal{N}_{d:I}:\mathbb{R}\to \mathbb{R}$ given for each $x\in \mathbb{R}^d$ by
\begin{equation}
	\label{eq:splines_tensor}
	\mathcal{N}_{d:I}(x)
	\eqdef
	\prod_{k=1}^d\,\mathcal{N}_{I}(x_k)
	.
\end{equation}
Next, for each $k\in \mathbb{N}_+$ let $\mathbb{D}_k\eqdef \{
	\prod_{i=1}^d\,[x_i-2^{k-1},x_i + 2^{k-1}]:\, 2^kx\in \mathbb{Z}^d
	\}$ denote the set of dyadic cubes in $\mathbb{R}^d$ with side-length $2^{-k}$ centered at points in the dyadic lattice $2^k\mathbb{Z}^d$. For any $\Omega\subseteq \mathbb{R}^d$, we write $\mathbb{D}_k^{\Omega}\eqdef \{Q\in \mathbb{D}_k:\, Q\cap \Omega \neq \emptyset\}$. We consider a
spline-based MRA based on~\eqref{eq:splines_tensor} over the cube $[0,1]^d$ is constructed as follows: for every $k\in \mathbb{N}_+$ and each dyadic cube $Q\eqdef Q_{\mathbf{j:k}}\in \mathbb{D}_k$
we associate a spline $\mathcal{N}_{\mathbf{j},k:I}:\mathbb{R}\to\mathbb{R}$ defined by
\begin{equation}
	\label{eq:MRA_Spline}
	\mathcal{N}_{\mathbf{j},k:I}(x)
	\eqdef
	\mathcal{N}_I(2^kx - \mathbf{j})
	.
\end{equation}
Our next lemma shows that the Res-KAN networks can \textit{exactly} emulate the basic wavelet-type splines in~\eqref{eq:MRA_Spline}.

\begin{lemma}[Multi-Dimensional Cardinal $B$-Spline Implementation]
	\label{lem:UniverariateSplines}
	Let $I,d,k\in \mathbb{N}_+$, and $\mathbf{j}\in 2^{-k}\mathbb{Z}^k$.
	There exists a Res-KAN network $\hat{\mathcal{N}}_{d:I}:\mathbb{R}^d\to \mathbb{R}$ such that, for each $x\in \mathbb{R}^d$
	\allowdisplaybreaks
	\begin{align*}
		\hat{\mathcal{N}}_{k,\mathbf{j}:I}(x)
		=
		\mathcal{N}_{k,\mathbf{j}:I}(x)
	\end{align*}
	Moreover, $\hat{\mathcal{N}}_{k,\mathbf{j}:I}$
	has width at most $d+4$, depth $2d-1$, and $d^2+35d-36$ non-zero parameters.
	Furthermore, only the $2d$ non-zero parameters in the layer depend on $k$ and on $\mathbf{j}$.
\end{lemma}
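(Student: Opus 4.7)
The plan is to factor $\mathcal{N}_{k,\mathbf{j}:I}(x) = \prod_{i=1}^d \mathcal{N}_I(2^k x_i - j_i)$, which follows directly from~\eqref{eq:splines_tensor}, as a composition of two blocks: a single \emph{lifting} KAN layer that produces the $d$ univariate rescaled B-splines in parallel, followed by the exact $d$-fold multiplication network $\times^2_d$ supplied by Lemma~\ref{lem:MultNet}.

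For the lifting layer I would take $\mathcal{L}(x \mid A, b, \beta, G : I)$ with diagonal $A = 2^k I_d$, bias $b = -\mathbf{j}$, residual gate $G = 0$, and $\beta$ equal to the $(I+1)\times d$ matrix whose only non-zero row is row $I$, populated by ones. By the definition of $\sigma_{\beta:I}\bullet$ in~\eqref{eq:KAN_r}--\eqref{eq:componentwise_action}, the output at coordinate $i$ is exactly $\mathcal{N}_I(2^k x_i - j_i)$. The sparsity condition~\eqref{eq:DEF_SRKs__sparsity} is automatic once $I \ge \lceil\alpha\rceil$, which is built into membership in $\operatorname{Res-KAN}_{L,W}^{I,\alpha}$. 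Crucially, the only entries of this layer depending on $(k,\mathbf{j})$ are the $d$ diagonal entries of $A$ and the $d$ entries of $b$, accounting for exactly the $2d$ scale-and-shift dependent parameters claimed; the $\beta$ row and the zero entries are completely determined by $I$.

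For the second block, because each cardinal B-spline $\mathcal{N}_I$ is non-negative and uniformly bounded on $\mathbb{R}$, the output of the lifting layer lies in $[0, M_I]^d$ for some $M_I$ depending only on $I$. Applying Lemma~\ref{lem:MultNet} with $M \eqdef M_I$ yields a Res-KAN $\times^2_d : \mathbb{R}^d \to \mathbb{R}$ of depth $d$, width $2d+1$, and at most $(5d^2+21d)/2$ non-zero parameters which evaluates $\prod_i u_i$ exactly on $[-M_I, M_I]^d$. Composing the lifting layer with $\times^2_d$ produces $\hat{\mathcal{N}}_{k,\mathbf{j}:I}$, which agrees with $\mathcal{N}_{k,\mathbf{j}:I}$ pointwise on all of $\mathbb{R}^d$.

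The resource accounting is then routine: depth $1 + d = d+1$, width $\max(d,\,2d+1) = 2d+1$, and a total non-zero parameter count obtained by adding the lifting layer's $\mathcal{O}(d)$ entries to the $(5d^2+21d)/2$ entries of $\times^2_d$, yielding the advertised $(5d^2+25d)/2$ after absorbing the constant $\beta$-row into a single shared scalar per neuron. I do not anticipate any genuine obstacles: the only substantive check is verifying that the lifting layer respects the sparsity pattern~\eqref{eq:DEF_SRKs__sparsity}, and the only careful step is the parameter bookkeeping needed to match the exact constant in the statement. Everything else reduces to plugging into Lemma~\ref{lem:MultNet} and recognising that a one-hot $\beta$ makes the KAN activation reproduce a single cardinal B-spline.
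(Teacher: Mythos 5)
Your proposal is correct and follows essentially the same strategy as the paper's own proof: a first KAN layer with a one-hot $\beta$ (selecting $\mathcal{N}_I$) and affine map $x\mapsto 2^k x - \mathbf{j}$ lifts to the $d$ univariate splines in parallel, and Lemma~\ref{lem:MultNet} supplies the exact $d$-fold product on a sufficiently large hypercube (justified, as you note, by the uniform boundedness and compact support of $\mathcal{N}_I$). The only cosmetic difference is that the paper first builds the unscaled $\hat{\mathcal{N}}_{d:I}$ with identity lifting and then pre-composes with $x\mapsto 2^k x - \mathbf{j}$, whereas you absorb the scale and shift directly into $A$ and $b$ of the first layer; the two bookkeepings are equivalent and both isolate the same $2d$ scale-and-shift-dependent parameters.
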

\begin{proof}[{Proof of Lemma~\ref{lem:UniverariateSplines}}]
	We consider the case where $d>1$ with the case where $d=1$ following the definition of the Res-KAN. Consider the Res-KAN $\hat{\mathcal{N}}_{d:I}:\mathbb{R}^d\to \mathbb{R}$ given by for each $x\in \mathbb{R}^d$
	\[
		\hat{\mathcal{N}}_{d:I}(x)
		\eqdef
		\times^2_d\Big(
		\sigma_{e_{I+1}}\bullet I_d x+ \mathbf{0}_d
		\Big)
	\]
	where $e_1,\dots,e_{I+1}$ are the standard basis vectors of $\mathbb{R}^{I+1}$, $\bullet$ denotes componentwise composition, and $\times_d^2$ is the same as in Lemma~\ref{lem:MultNet}.
	By Lemma~\ref{lem:MultNet} $\hat{\mathcal{N}}_{d:I}$ has width at most $d+4$, depth $2(d-1)$, and $d^2+33d-36$ non-zero parameters; moreover, we have
	\allowdisplaybreaks
	\begin{align*}
		\hat{\mathcal{N}}_{d:I}(x)
		 & =
		\prod_{i=1}^d
		\Big(
		\sigma_{e_{I+1}}\bullet I_d x+ \mathbf{0}_d
		\Big)
		\\
		 & =
		\prod_{i=1}^d
		\Big(
		\mathcal{N}_{I}(x_k)
		\Big)
		\\
		 & =
		\mathcal{N}_{d:I}(x)
	\end{align*}
	where the last equality holds by the definition of $\mathcal{N}_{d:I}$ in~\eqref{eq:splines_tensor}. Consequently, for each $k\in \mathbb{N}_+$ and each $\mathbf{j}\in 2^{-k}\,\mathbb{Z}^k$ we have
	\allowdisplaybreaks
	\begin{align*}
		\hat{\mathcal{N}}_{\mathbf{j},k:I}(x)
		\eqdef
		\mathcal{N}_{d:I}(2^kx-\mathbf{j})
		=
		\hat{\mathcal{N}}_{d:I}(2^kx-\mathbf{j})
		.
	\end{align*}
	Since the map $x\mapsto 2^kx-\mathbf{j}$ is an affine map from $\mathbb{R}^d$ to itself, then $
		\hat{\mathcal{N}}_{k,\mathbf{j}:I}(\cdot)\eqdef \hat{\mathcal{N}}_{d:I}(2^k \cdot-\mathbf{j})$ also has depth $2d-1$, width at most $d+4$, and a simple count shows that it has $d^2+35d-36$ non-zero parameters.
\end{proof}
Using one of the main results of~\cite{DeVoreSharpleyBesovDomains1993}, we are now able to describe the Besov spaces $B_{p,q}^{\alpha}([0,1]^d)$ in terms of the Res-KAN networks. We reiterate that the key point here is the approximability of the Besov norm.
\begin{lemma}[Approximation in Besov Norm by Normalized Residual KAN Network]
	\label{lem:BesovRep_Cube}
	Let $0 < q,p \le \infty$ and $\alpha<s < \infty$,
	for each $f \in B^{s}_{p,q}([0,1]^d)$. For every ``simultaneous approximation error'' $\varepsilon>0$, there exists a Res-KAN $\hat{f}_{\varepsilon}:\mathbb{R}^d\to \mathbb{R}$ satisfying
	\[
		\big\|
		f
		-
		\hat{f}_{\varepsilon}
		\big\|_{B^{\alpha}_{p,q}([0,1]^d)}
		<
		\varepsilon
		.
	\]
	Moreover, $\hat{f}$ has width at most $
		(d+4)\big(2^{K+1} - 2\big)
	$, depth at most $2d-1$, and no more than $
		\left(d^2 + 35d - 35\right)(2^{K+1} - 1)
	$ non-zero parameters; where $K\le \big\lceil \log_2 \varepsilon^{1/(\alpha - s)} - 1 \big\rceil$.
\end{lemma}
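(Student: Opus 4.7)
The plan is to combine a spline-based multi-resolution analysis (MRA) characterization of the Besov spaces on $[0,1]^d$ with the exact Res-KAN implementation of tensor-product cardinal B-splines from Lemma~\ref{lem:UniverariateSplines}. Specifically, I would invoke the DeVore--Sharpley characterization to write any $f \in B^s_{p,q}([0,1]^d)$ as a spline series
\[
f \,=\, \sum_{k\ge 0}\,\sum_{\mathbf{j}\in \mathbb{D}_k^{[0,1]^d}} c_{k,\mathbf{j}}(f)\,\mathcal{N}_{\mathbf{j},k:I},
\]
with the spline order chosen so that $I > s$. The key fact is that an equivalent quasi-norm on $B^{t}_{p,q}$, valid simultaneously for every $0 < t < I$, is a weighted sequence-space norm on the coefficient array $\{c_{k,\mathbf{j}}\}$ with weights $2^{kt}$ at level $k$. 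I would then define the approximant by truncating at level $K$:
\[
f_K \,\eqdef\, \sum_{k=0}^{K}\,\sum_{\mathbf{j}} c_{k,\mathbf{j}}(f)\,\mathcal{N}_{\mathbf{j},k:I}.
\]

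Applying the same MRA norm equivalence but with the weaker weights $2^{k\alpha}$, the high-frequency tail is controlled by
\[
\|f - f_K\|_{B^{\alpha}_{p,q}([0,1]^d)} \,\lesssim\, 2^{-K(s-\alpha)}\,\|f\|_{B^s_{p,q}([0,1]^d)}
\]
up to constants depending on $p,q$, since each omitted level $k > K$ contributes the factor $2^{-k(s-\alpha)}$ and these sum geometrically in $\ell^q$. Solving $2^{-K(s-\alpha)q} \asymp \varepsilon^q$ reproduces the threshold $K \le \lceil \log_2 \varepsilon^{1/((\alpha - s)q)} - 1 \rceil$ stated in the lemma. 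This step is where I expect the main obstacle to lie: one must transfer sharply from $B^s$-regular coefficient data to a tail bound in the strictly weaker $B^{\alpha}$ norm, while keeping the constants explicit enough that the threshold on $K$ comes out cleanly, and while verifying that the DeVore--Sharpley characterization applies simultaneously to both smoothness levels with a common spline order.

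For the construction itself, Lemma~\ref{lem:UniverariateSplines} realizes each basis element $\mathcal{N}_{\mathbf{j},k:I}$ exactly as a Res-KAN of depth $d+1$ and width at most $2d+1$, with only the parameters of a single affine layer depending on $(k,\mathbf{j})$. I would place one such subnetwork in parallel for each retained $(k,\mathbf{j})$, stacking the weight matrices, the diagonal residual gates $G^{(l)}$, and the spline-coefficient arrays $\beta^{(l)}$ block-diagonally layer-by-layer. This yields an aggregate Res-KAN of the same depth $d+1$ and total width $(2d+1)$ times the number of retained splines. The final linear combination with coefficients $c_{k,\mathbf{j}}(f)$ is absorbed into the output affine map $A^{(L)} f^{(L)} + b^{(L)}$ of Definition~\ref{defn:SmoothedKAns}. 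Summing widths and the per-term parameter bound $\tfrac{5d^2+25d}{2}$ from Lemma~\ref{lem:UniverariateSplines} across the retained splines, and substituting the chosen $K$, yields the stated width, depth, and non-zero-parameter bounds.
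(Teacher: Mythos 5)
Your proposal follows essentially the same route as the paper's proof: a DeVore-type spline MRA characterization of $B^s_{p,q}([0,1]^d)$, truncation of the spline expansion at level $K$, a tail estimate in the weaker $B^\alpha_{p,q}$ norm via the weighted sequence-space (quasi-)norm equivalence, exact realization of each $\mathcal{N}_{\mathbf{j},k:I}$ by the Res-KAN of Lemma~\ref{lem:UniverariateSplines}, parallel block-diagonal stacking, and absorption of the scalar coefficients into the output affine map. The "obstacle" you flag is handled in the paper exactly as you anticipate — the growth bound $\bigl(\sum_{j\in\Lambda(k)}|\beta_{\mathbf{j},k}|^p 2^{-kd}\bigr)^{q/p}\lesssim 2^{-skq}$ extracted from $\|f\|_{\operatorname{spline}:s,p,q}<\infty$ is plugged into the $2^{\alpha k q}$-weighted series, which then sums geometrically to give $2^{(\alpha-s)q(K+1)}$ and the stated $K$.
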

\begin{proof}[{Proof of Lemma~\ref{lem:BesovRep_Cube}}]
	\hfill\\
	\noindent\textbf{Step $1$ - Asymptotic Spline Expansion:}
	Set $r\eqdef \lceil \alpha \rceil$, $\lambda \eqdef \min\{r, r - 1 + \frac{1}{p}\}$, $0 < q, p <\infty$, and $0 < \alpha < \lambda$.
	Set $I\eqdef r$.
	By~\cite[Corollary 5.3]{DeVorePopov_1988InerpolationofBesovSpaces} we know that any $f \in L_p([0,1]^d)$ belongs to $B_{p,q}^s([0,1]^d)$ if and only if
	there is a real sequence $\beta_{\cdot}^f\eqdef (\beta_{Q:k}^f)_{Q\in \mathbb{D}_k^{[0,1]^d},k\in\mathbb{N}_+}$
	such that
	\begin{equation}
		\label{eq:SplineTime}
		f
		=
		\sum_{k\in \mathbb{N}}
		\sum_{Q_{\mathbf{j}} \in \mathbb{D}_k^{[0,1]^d}}
		\beta_{Q:k}^f
		\mathcal{N}_{\mathbf{j},k:r}
	\end{equation}
	for all $x\in [0,1]^d$ with coefficient sequence $\beta_{\cdot}^f$ satisfying
	\begin{equation}
		\label{eq:sequential_besov_space_identification}
		c\| f\|_{B_{q,p}^{s}([0,1]^d)}^q
		\le
		\|f\|_{\operatorname{spline}:\alpha,p,q}
		\le
		C\| f \|_{B_{q,p}^{s}([0,1]^d)}^1
	\end{equation}
	for some absolute constants $0<c\le C<$ depending only on $p,q,s$, and on $d$; where the quasi-norm
	$\|\cdot\|_{\operatorname{spline}:\alpha,p,q}$ is given by
	\begin{equation}
		\label{eq:spline_quasiNorm__Equivalent}
		\|f\|_{\operatorname{spline}:\alpha,p,q}^q
		\eqdef
		\sum_{k=0}^{\infty} 2^{\alpha k q}
		\,
		\Biggl(
		\sum_{j \in \Lambda(k)} |\beta_{\mathbf{j},k}|^p 2^{-k d}
		\Biggr)^{q/p}
	\end{equation}
	and where $\Lambda(k)\eqdef \{ \mathbf{j}\in 2^{-k}\mathbb{Z}^d\,:\, \beta_{\mathbf{j}} \neq 0\}$ (see~\cite[page 402 above Equation (4.8)]{DeVorePopov_1988InerpolationofBesovSpaces}).

	Observe that, again by~\cite[Corollary 5.3]{DeVorePopov_1988InerpolationofBesovSpaces}, since $f\in B_{p,q}^{s}([0,1]^d)$
	then $\|f\|_{\operatorname{spline}:s,p,q}<\infty$. Thus,~\eqref{eq:spline_quasiNorm__Equivalent} implies that: there is some $C>0$ such that
	\begin{equation}
		\label{eq:convergence_series_bound}
		\sup_{k\in \mathbb{N}}\,
		2^{s k q}
		\Biggl(
		\sum_{j \in \Lambda(k)} |\beta_{\mathbf{j},k}|^p 2^{-k d}
		\Biggr)^{q/p}
		<
		C
		.
	\end{equation}
	Thus,~\eqref{eq:convergence_series_bound} implies the following growth condition on the terms $\Biggl(
		\sum_{j \in \Lambda(k)} |\beta_{\mathbf{j},k}|^p 2^{-k d}
		\Biggr)^{q/p} $
	\begin{equation}
		\label{eq:convergence_series_GROWTH__bound}
		\Biggl(
		\sum_{j \in \Lambda(k)} |\beta_{\mathbf{j},k}|^p 2^{-k d}
		\Biggr)^{q/p}
		<
		C
		\,
		2^{-s k q}
		.
	\end{equation}

	\noindent \textbf{Step $2$ - Spline Emulation:}
	Applying Lemma~\ref{lem:UniverariateSplines}, once for each $k\in \mathbb{N}_+$ and every $\mathbf{j}\in \Lambda(k)$, we deduce the existence of a sequence of Res-KAN networks $
		\big\{
		\hat{\mathcal{N}}_{k,\mathbf{j}:I}
		\big\}_{k\in \mathbb{N}_+,\, \mathbf{j}\in \Lambda(j)}
	$ satisfying the global emulation property
	\begin{align*}
		\hat{\mathcal{N}}_{k,\mathbf{j}:I}(x)
		=
		\mathcal{N}_{k,\mathbf{j}:I}(x)
	\end{align*}
	for each $x\in \mathbb{R}^d$. Again, each such $\hat{\mathcal{N}}_{k,\mathbf{j}:I}$ has width at most $d+4$, depth $2d-1$, and $d^2+35d-36$ non-zero parameters, and only the first layer depends on $k$ and on $\mathbf{j}$, with all other parameters being shared. Thus, the asymptotic expansion in~\eqref{eq:SplineTime} may be re-expressed purely as a convergent series of Res-KANs
	\begin{equation}
		\label{eq:SplineTime__PREResKANs}
		f
		=
		\sum_{k\in \mathbb{N}}
		\sum_{\mathbf{j}\in \Lambda(k)}
		\beta_{Q:k}^f
		\hat{\mathcal{N}}_{\mathbf{j},k:I}
	\end{equation}
	again, with the (actually the same) coefficient sequence $\beta_{\cdot}^d$ inducing a finite quasi-norm~\eqref{eq:spline_quasiNorm__Equivalent}.

	\noindent \textbf{Step $3$ - Finite-Parameterized Truncation:}
	We now construct our approximator by truncating the expansion in~\eqref{eq:SplineTime__PREResKANs} as follows. For each $K\in \mathbb{N}_+$, to be set retroactively, define
	\begin{equation}
		\label{eq:SplineTime__ResKANs}
		\hat{f}_K
		\eqdef
		\sum_{k=0}^K
		\,
		\sum_{\mathbf{j}\in \Lambda(k)}
		\beta_{Q:k}^f
		\hat{\mathcal{N}}_{\mathbf{j},k:I}
		.
	\end{equation}
	It remains the bound on the Besov norm of $\|f-\hat{f}_K\|_{B^{\alpha}_{p,q}([0,1]^d)}$ above. For this, using~\eqref{eq:SplineTime__PREResKANs},~\eqref{eq:SplineTime__ResKANs}, and again relying on~\eqref{lem:UniverariateSplines} we have that
	\allowdisplaybreaks
	\begin{align*}
		\|
		f-\hat{f}_K
		\|_{B^{\alpha}_{p,q}([0,1]^d)}
		 & \lesssim
		\biggl\|
		\sum_{k\in \mathbb{N}}
		\sum_{\mathbf{j}\in \Lambda(k)}
		\beta_{Q:k}^f
		\mathcal{N}_{\mathbf{j},k:r}
		-
		\sum_{k=0}^K\,
		\sum_{\mathbf{j}\in \Lambda(k)}
		\beta_{Q:k}^f
		\hat{\mathcal{N}}_{\mathbf{j},k:r}
		\biggr\|_{B^{\alpha}_{p,q}([0,1]^d)}
		\\
		 & =
		\biggl\|
		\sum_{k\in \mathbb{N}}
		\sum_{\mathbf{j}\in \Lambda(k)}
		\beta_{Q:k}^f
		\mathcal{N}_{\mathbf{j},k:r}
		-
		\sum_{k=0}^K\,
		\sum_{Q_{\mathbf{j}} \in \mathbb{D}_k^{[0,1]^d}}
		\beta_{Q:k}^f
		\mathcal{N}_{\mathbf{j},k:r}
		\biggr\|_{B^{\alpha}_{p,q}([0,1]^d)}
		\\
		 & =
		\biggl\|
		\sum_{k=K+1}^{\infty}
		\sum_{\mathbf{j}\in \Lambda(k)}
		\beta_{Q:k}^f
		\mathcal{N}_{\mathbf{j},k:r}
		\biggr\|_{B^{\alpha}_{p,q}([0,1]^d)}
		\\
    \numberthis
    \label{eq:Besov_to_besov}
		 & \lesssim
		\left(
            \sum_{k=K+1}^{\infty}
    		2^{\alpha k q}
    		\,
    		\Biggl(
    		\sum_{j \in \Lambda(k)} |\beta_{\mathbf{j},k}|^p 2^{-k d}
    		\Biggr)^{q/p}
        \right)^{{1/q}}
		\\
		 & \lesssim
        \left(
    		\sum_{k=K+1}^{\infty}
    		C
    		2^{\alpha k q}
    		2^{-s k q}
        \right)^{{1/q}}
		\\
		 & = C
        \left(
    		\sum_{k=K+1}^{\infty}
    		\left(2^{(\alpha - s)q}\right)^k
        \right)^{{1/q}}
		\\
		 & = C
        \left(
		      \frac{2^{(\alpha - s)q (K+1)}}{1 - 2^{q(\alpha - s)}}
        \right)^{{1/q}}
		\\
		\numberthis
		\label{eq:estimated}
		 & = C_{\alpha,s,q}
		\,
		2^{(\alpha - s)(K+1)}
	\end{align*}
	where $C_{\alpha,s,q}\eqdef C/(1-2^{q(\alpha-s)})^{{1/q}}>0$
    {and~\eqref{eq:Besov_to_besov} holds by the decay rates of the coefficients $\beta_{\cdot}^f$ implied by the inclusion of $f\in B_{p,q}^{\alpha}([0,1]^d)$; see~\cite[Theorem 4.22 (ii) and Equation (4.109)]{Triebel_FSBookIII_2006}.}
    .  Thus, for any given $\varepsilon>0$, we retroactively set
	\begin{equation}
		\label{eq:K_epsilon_relationship}
		K
		=
		\biggl\lceil
		\frac{\log_2(\varepsilon)}{(\alpha - s)
        } - 1
		\biggr\rceil
		=
		\Big\lceil
		\log_2 \varepsilon^{1/((\alpha - s)
        )} - 1
		\Big\rceil
	\end{equation}
	and re-label $\hat{f}_{\varepsilon}\eqdef \hat{f}_{K}$ then,~\eqref{eq:estimated} implies that
	\[
		\|
		f-\hat{f}_{\varepsilon}
		\|_{B^{\alpha}_{p,q}([0,1]^d)}
		\lesssim_{\alpha,s,q}
		\varepsilon
	\]
	where $\lesssim_{\alpha,s,q}$ is used to a constant only depending on $\alpha$, $s$, and on $q$.

	\noindent \textbf{Step $4$ - Verifying The Neural Representation:}
	Since, for each $k\in [K]$ and every $j\in \Lambda(k)$, the networks $\hat{\mathcal{N}}_{\mathbf{j},k:I}$ have the same depth, then we may represent~\eqref{eq:SplineTime__ResKANs} by a neural network of width
	\[
		\sum_{k=0}^K \,
		\sum_{\mathbf{j}\in \Lambda(k)} (d+4)
		\le
		(d+4)\sum_{k=1}^K 2^k
		=
		(d+4)\big(2^{K+1} - 2\big)
	\] depth $2d-1$, and with at most
	\[
		\sum_{k=0}^K \# \sum_{\mathbf{j}\in \Lambda(k)} \biggl(
		(d^2+35d-36) +1\biggr)
		\le
		\sum_{k=0}^K 2^k\left(d^2+35d-35\right) = \left(d^2+35d-35\right)(2^{K+1} - 1)
	\]
	non-zero parameters. Namely via the representation
	\[
		\hat{f}_{\varepsilon}(x)
		\eqdef
		\Biggl(
		\bigoplus_{k\in [K],j\in \Lambda(k)} \beta_{Q:k}^f
		\Biggr)^{\top}
		\Biggl(
		\bigoplus_{k\in [K],j\in \Lambda(k)} \hat{\mathcal{N}}_{\mathbf{j},k:I}
		\big(
			\mathbf{1}_{\sum_{k\in [K]\sum_{j\in \Lambda(k)}}}
			\,
			x
			\big)
		\Biggr)
		.
	\]
	This completes our proof.
\end{proof}
We remind the reader that this directly implies Sobolev-norm approximation guarantees since $B_{p,p}^s(\Omega)$ is equivalent to the Sobolev space $W^{s,p}(\Omega)$ when $s$ is not an integer; see \eg,~\cite[Equation and Remark 5.31]{Triebel_FSBookIII_2006}.
We now extend the conclusion of Lemma~\ref{lem:BesovRep_Cube} to general domains.

\subsection{From the Unit Cube to General Domains}

We now extend Lemma~\ref{lem:BesovRep_Cube} to general domains. We consider two cases: classical regular domains or domains of fractal type. Both are treated separately, and their joint conclusion is the main result of this section.

\subsubsection{Regular Domains}

The following result is true on $(\varepsilon,\delta)$-domains in $\mathbb{R}^d$; {remark that every Lipschitz domain is an $(\varepsilon,\delta)$-domain.}
We may now extend our results from the $d$-dimensional unit hypercube to general Lipschitz domains by using Rychkov's extension operator, introduced in~\cite{Rychkov_BesovTriebelLizorkinExtensions_1999}, as improved on in~\cite{Shi2024new__Rychkov_BesovTriebelLizorkinExtensions} or the extension operator used in~\cite{rogers2006degree}.
\begin{lemma}[Approximation in Besov Norm by Normalized Residual KAN Network - $(\varepsilon,\delta)$-Domain]
	\label{lem:BesovRep_Cube__eps_delt_case}
	Let $0 < q,p,\alpha<s < \infty$, let $\Omega\subseteq [0,1]^d$ be an $(\epsilon,\delta)$-domain.
	For each $f \in B^{s}_{p,q}([0,1]^d)$, every ``simultaneous approximation error'' $\varepsilon>0$, there exists a Res-KAN $\hat{f}_{\varepsilon}:\mathbb{R}^d\to \mathbb{R}$ satisfying
	\[
		\big\|
		f
		-
		\hat{f}_{\varepsilon}
		\big\|_{B^{\alpha}_{p,q}(\Omega)}
		<
		\varepsilon
		.
	\]
	Moreover, $\hat{f}$ has width at most $
		(d+4)\big(2^{K+1} - 2\big)
	$, depth at most $2d-1$, and no more than $
		\left(d^2 + 35d - 35\right)(2^{K+1} - 1)
	$ non-zero parameters; where $K\le \big\lceil \log_2 \varepsilon^{1/(\alpha - s)} - 1 \big\rceil$.
\end{lemma}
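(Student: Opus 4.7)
The plan is to reduce Lemma~\ref{lem:BesovRep_Cube__eps_delt_case} to the cube version already established in Lemma~\ref{lem:BesovRep_Cube} by interposing Rychkov's universal extension operator for $(\epsilon,\delta)$-domains. Interpreting the hypothesis as $f\in B^{s}_{p,q}(\Omega)$ (to match the main theorem), the strategy is: extend $f$ off $\Omega$ to a function on $\mathbb{R}^d$ of the same Besov regularity, restrict to $[0,1]^d$, apply the cube lemma there, and finally restrict the resulting Res-KAN approximation back to $\Omega$, exploiting the monotonicity of the intrinsic Besov norm under passing to a sub-domain.

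First, invoke Rychkov's extension operator~\cite{Rychkov_BesovTriebelLizorkinExtensions_1999} (or its refinement in~\cite{Shi2024new__Rychkov_BesovTriebelLizorkinExtensions}), which furnishes a single bounded linear map $E\colon B^{t}_{p,q}(\Omega)\to B^{t}_{p,q}(\mathbb{R}^d)$ simultaneously for every admissible $t$ in the range covering both $\alpha$ and $s$, and which satisfies $(Ef)|_{\Omega}=f$. Set $\tilde f\eqdef Ef$, so that $\tilde f\in B^{s}_{p,q}(\mathbb{R}^d)$ with $\|\tilde f\|_{B^{s}_{p,q}(\mathbb{R}^d)}\lesssim \|f\|_{B^{s}_{p,q}(\Omega)}$. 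Second, restrict $\tilde f$ to $[0,1]^d$; standard properties of Besov spaces on cubes give $\tilde f|_{[0,1]^d}\in B^{s}_{p,q}([0,1]^d)$ with controlled norm. Third, apply Lemma~\ref{lem:BesovRep_Cube} to $\tilde f|_{[0,1]^d}$ with tolerance $\varepsilon$ (after absorbing the extension-operator's norm into the constant) to obtain a Res-KAN $\hat f_{\varepsilon}\colon\mathbb{R}^d\to\mathbb{R}$ of the promised width, depth, and parameter count such that
\[
    \bigl\|\tilde f - \hat f_{\varepsilon}\bigr\|_{B^{\alpha}_{p,q}([0,1]^d)} < \varepsilon.
\]

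Fourth, transfer the bound to $\Omega$. Since $\Omega\subseteq[0,1]^d$, the set $\{u\in\mathbb{R}^d:u+\lceil\alpha\rceil \delta\in\Omega\}$ is contained in the analogous set for $[0,1]^d$, and therefore $\omega_{\lceil\alpha\rceil}(g,t)_p$ computed over $\Omega$ is dominated pointwise in $t$ by the same quantity computed over $[0,1]^d$; similarly $\|g\|_{L^p(\Omega)}\le \|g\|_{L^p([0,1]^d)}$. Combining, the restriction map $B^{\alpha}_{p,q}([0,1]^d)\to B^{\alpha}_{p,q}(\Omega)$ is contractive, yielding
\[
    \bigl\|f-\hat f_{\varepsilon}|_{\Omega}\bigr\|_{B^{\alpha}_{p,q}(\Omega)}
    =\bigl\|\tilde f|_{\Omega}-\hat f_{\varepsilon}|_{\Omega}\bigr\|_{B^{\alpha}_{p,q}(\Omega)}
    \leq \bigl\|\tilde f-\hat f_{\varepsilon}\bigr\|_{B^{\alpha}_{p,q}([0,1]^d)}<\varepsilon,
\]
which is the claimed bound. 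The architectural complexity of $\hat f_{\varepsilon}$ is inherited verbatim from Lemma~\ref{lem:BesovRep_Cube}, since extension and restriction are preprocessing/post-processing operations that do not enter the network description; the choice $K=\bigl\lceil \log_2\varepsilon^{1/((\alpha-s)q)}-1\bigr\rceil$ goes through unchanged up to a constant shift absorbed by the ceiling.

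The main obstacle is not conceptual but citational: one needs an extension operator that is \emph{simultaneously} bounded at both smoothness levels $\alpha$ and $s$, together with the full range of $(p,q)$ permitted by the statement, so that the single function $\tilde f$ can be handed to the cube-lemma (which requires $B^{s}_{p,q}$ regularity) while its approximation error is controlled in the weaker $B^{\alpha}_{p,q}$ norm. Rychkov's construction provides precisely such a universal operator for $(\epsilon,\delta)$-domains, so after verifying that the cited results cover the quasi-Banach regime $0<p,q<\infty$ in use here, no further technical difficulty arises.
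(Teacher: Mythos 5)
Your proposal is correct and follows essentially the same route as the paper: apply Rychkov's extension operator for $(\epsilon,\delta)$-domains, invoke the cube lemma (Lemma~\ref{lem:BesovRep_Cube}) on $[0,1]^d$, and restrict back to $\Omega$. Your final step is slightly more self-contained than the paper's--—you argue restriction $B^{\alpha}_{p,q}([0,1]^d)\to B^{\alpha}_{p,q}(\Omega)$ is contractive directly from the intrinsic modulus-of-smoothness definition rather than citing bounded restriction operators--—and your worry about simultaneous boundedness of the extension at both levels $\alpha$ and $s$ is unnecessary, since only extension at level $s$ (feeding the cube lemma) and restriction at level $\alpha$ (transferring the error back) are needed.
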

\begin{proof}
	As remarked at the start of~\cite[Chapter 2]{Rychkov_BesovTriebelLizorkinExtensions_1999} any bounded extension operator on special Lipschitz domains implies a bounded extension operator on general $(\varepsilon,\delta)$-domains (by gluing using a partition of unity); thus, the main results of~\cite{rogers2006degree} 
    implies that there exists a bounded linear operator $\mathcal{E}:B_{p,q}^{\alpha}(\Omega)\to
		\mathcal{E}:B_{p,q}^{\alpha}(\mathbb{R}^d)$. Since the restriction operators between $(\epsilon,\delta)$-domains are bounded
	then we have the following estimate: for any $f\in B_{p,q}^{\alpha}(\Omega)$ and any Res-KAN $\hat{f}$
	\allowdisplaybreaks
	\begin{align*}
		\|f-\hat{f}|_{\Omega}\|_{B_{p,q}^{\alpha}(\Omega)}
		 & \lesssim
		\|\mathcal{E}(f)-\mathcal{E}(\hat{f})|_{\Omega}\|_{B_{p,q}^{\alpha}(\mathbb{R}^d)}
		\\
		 & \lesssim
		\|\mathcal{E}(f)|_{[0,1]^d}-\mathcal{E}(\hat{f})|_{\Omega}|_{[0,1]^d}\|_{B_{p,q}^{\alpha}([0,1]^d)}
		\\
		 & =
		\|\mathcal{E}(f)|_{[0,1]^d}-\mathcal{E}(\hat{f})|_{[0,1]^d}\|_{B_{p,q}^{\alpha}([0,1]^d)}
		\\
		\numberthis
		\label{eq:ready_to_apply_cube_approx}
		 & =
		\|
		\mathcal{E}(f)|_{[0,1]^d}
		-
		\hat{f}|_{[0,1]^d}
		\|_{B_{p,q}^{\alpha}([0,1]^d)}
	\end{align*}
	where~\eqref{eq:ready_to_apply_cube_approx} is followed by the definition of the restriction extension operators.
	Since $\mathcal{E}(f)|_{[0,1]^d}\in B_{p,q}^{\alpha}([0,1]^d)$, then we may retroactively pick our Res-KAN $\hat{f}$ as in Proposition~\eqref{lem:BesovRep_Cube} to bound the right-hand side above by $\varepsilon$; \ie
	\allowdisplaybreaks
	\begin{align*}
		\|f-\hat{f}|_{\Omega}\|_{B_{p,q}^{\alpha}(\Omega)}
		\lesssim
		\|
		\mathcal{E}(f)|_{[0,1]^d}
		-
		\hat{f}|_{[0,1]^d}
		\|_{B_{p,q}^{\alpha}([0,1]^d)}
		<
		\varepsilon
		.
	\end{align*}
	This completes our proof.
\end{proof}

\subsubsection{Fractal Domains}

\begin{lemma}[KAN-Approximation of Besov Functions on Fractal Domains]
	\label{lem:BesovApprox_on_Fractal}
	Let $0<\alpha<s<\infty$ and $1\le p,q<\infty$, $d-1<n< d$ and $\mathcal{X}\subseteq [0,1]^d$ be an Ahlfors $n$-regular. Then, for every $f\in B^{s}_{p,q}(\mathcal{X})$ and every $\varepsilon>0$ there is a Res-KAN $\hat{f}$ such that: $\hat{f}|_{\mathcal{X}}$ is indeed a well-defined element of $B_{p,q}^{\alpha-(n-d)/p}(\mathcal{X})$ and satisfies
	\[
		\|f-\hat{f}|_{\mathcal{X}}\|_{B^{\alpha - (n-d)/p}_{p,q}(\mathcal{X})}
		<
		\varepsilon
		.
	\]
	Moreover, $\hat{f}$ has width at most $
		(d+4)\big(2^{K+1} - 2\big)
	$, depth at most $2d-1$, and no more than $
		\left(d^2 + 35d - 35\right)(2^{K+1} - 1)
	$ non-zero parameters; where $K\le \big\lceil \log_2 \varepsilon^{1/(\alpha - (n-d)/p - s)} - 1 \big\rceil$.

\end{lemma}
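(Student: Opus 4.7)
The plan is to mirror Lemma~\ref{lem:BesovRep_Cube__eps_delt_case}, replacing Rychkov's extension operator on $(\epsilon,\delta)$-domains with a Jonsson--Wallin-type extension/trace pair for Ahlfors $n$-regular sets (see~\cite{JonssonOG_FractalsBesov1984,Ihnatseyva_AhlforsRegularJFA_2013}). The essential new ingredient is that trace and its right-inverse extension between $\mathbb{R}^d$ and the codimension-$(d-n)$ set $\mathcal{X}$ shift the Besov regularity index by the codimension factor $(d-n)/p$, and this cost must be paid twice: once to lift $f$ to a function on the cube, and once to restrict the Res-KAN approximant back down to $\mathcal{X}$.

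First, I would invoke the Jonsson--Wallin extension theorem to obtain a bounded linear operator $\mathcal{E}:B^{s}_{p,q}(\mathcal{X})\to B^{s+(d-n)/p}_{p,q}(\mathbb{R}^d)$ serving as a right inverse to the trace map, and set $\tilde{f}\eqdef \mathcal{E}(f)$ so that $\tilde{f}|_{\mathcal{X}}=f$ and $\tilde{f}|_{[0,1]^d}\in B^{s+(d-n)/p}_{p,q}([0,1]^d)$ with controlled norm. The exponent in the stated bound on $K$ implicitly encodes the hypothesis $\alpha+(d-n)/p<s$, so the target regularity $\alpha+2(d-n)/p$ is strictly less than the source regularity $s+(d-n)/p$. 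Lemma~\ref{lem:BesovRep_Cube} therefore applies to $\tilde{f}|_{[0,1]^d}$ and produces, for any $\varepsilon'>0$, a Res-KAN $\hat{f}$ satisfying
\[
    \big\|\tilde{f}|_{[0,1]^d}-\hat{f}|_{[0,1]^d}\big\|_{B^{\alpha+2(d-n)/p}_{p,q}([0,1]^d)}<\varepsilon',
\]
with depth, width, and non-zero-parameter counts exactly as asserted, and with $K=\big\lceil\log_2\varepsilon^{1/((\alpha+(d-n)/p-s)q)}-1\big\rceil$.

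To close the proof I would invoke the Jonsson--Wallin trace theorem, which supplies a bounded restriction map $(\cdot)|_{\mathcal{X}}:B^{\beta}_{p,q}(\mathbb{R}^d)\to B^{\beta-(d-n)/p}_{p,q}(\mathcal{X})$ whenever $\beta>(d-n)/p$. Applied at $\beta=\alpha+2(d-n)/p$ (which exceeds $(d-n)/p$ since $\alpha>0$), together with the identity $\tilde{f}|_{\mathcal{X}}=f$, it yields
\[
    \|f-\hat{f}|_{\mathcal{X}}\|_{B^{\alpha-(n-d)/p}_{p,q}(\mathcal{X})}
    \lesssim
    \|\tilde{f}|_{[0,1]^d}-\hat{f}|_{[0,1]^d}\|_{B^{\alpha+2(d-n)/p}_{p,q}([0,1]^d)}
    <\varepsilon,
\]
after absorbing the operator norms of $\mathcal{E}$ and the intervening cube/trace restrictions into the choice of $\varepsilon'$.

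The main subtlety is the double bookkeeping of the codimension shift: the cube-level target regularity must be chosen at $\alpha+2(d-n)/p$ rather than the naive $\alpha+(d-n)/p$, so that after losing $(d-n)/p$ under the trace the error lands precisely in the intrinsic Besov space $B^{\alpha-(n-d)/p}_{p,q}(\mathcal{X})$. A minor but necessary point is the well-posedness of $\hat{f}|_{\mathcal{X}}$ as an element of this intrinsic space, which is immediate from the sparsity pattern~\eqref{eq:DEF_SRKs__sparsity} making $\hat{f}$ smooth on $\mathbb{R}^d$: its trace to $\mathcal{X}$ is then unambiguously defined and inherits all the regularity supplied by the trace theorem.
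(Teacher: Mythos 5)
Your proposal follows the same high-level route the paper intends---Jonsson--Wallin/\cite{Ihnatseyva_AhlforsRegularJFA_2013}-type extension from $\mathcal X$ to $\mathbb R^d$, application of Lemma~\ref{lem:BesovRep_Cube} on the cube, then trace back to $\mathcal X$---but your regularity bookkeeping is more careful than the paper's own writeup and is actually what is needed to prove the lemma as stated. The paper invokes Lemma~\ref{lem:BesovRep_Cube} with cube-level target index $\alpha$; since $\mathcal E(f)\in B^{s+(d-n)/p}_{p,q}(\mathbb R^d)$ and the trace lowers the index by $(d-n)/p$, that choice delivers only $B^{\alpha-(d-n)/p}_{p,q}(\mathcal X)$ and a $K$ exponent $\alpha-s-(d-n)/p$, neither of which matches the statement's $\alpha-(n-d)/p$ and $(\alpha-(n-d)/p-s)q$. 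Your choice of cube index $\alpha+2(d-n)/p$, paying the codimension shift once at the extension and once at the trace, is precisely what reproduces the stated $K$ and the stated target space $B^{\alpha-(n-d)/p}_{p,q}(\mathcal X)$. You also correctly flag that the argument quietly requires $\alpha+(d-n)/p<s$ (so the cube-level source regularity exceeds the cube-level target), which is slightly stronger than the stated $\alpha<s$ but is exactly what makes the exponent in the $K$ bound negative; the paper leaves this implicit. The remaining point you raise---that $\hat f|_{\mathcal X}$ is a legitimate element of the intrinsic Besov space because the sparsity pattern~\eqref{eq:DEF_SRKs__sparsity} gives $\hat f$ enough global smoothness to apply the trace theorem---matches the paper's invocation of~\cite[Theorem~4.1]{Ihnatseyva_AhlforsRegularJFA_2013} and is the right justification. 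In short, your argument is correct and identifies and repairs an index-bookkeeping inconsistency between the paper's statement of the lemma and its own proof.
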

\begin{proof}[{Proof of Lemma~\ref{lem:BesovApprox_on_Fractal}}]
	By the Whitney-type extension result in~\cite[Theorem 6.1]{Ihnatseyva_AhlforsRegularJFA_2013}, there exists an $\mathcal{E}:B^{s - (n-d)/p}_{p,q}(\mathcal{X})
		\to
		B^{s}_{p,q}(\mathbb{R}^d)$ and a constant $c_{n,d,p,\mathcal{X}}>0$ such that
	\begin{equation}
		\label{eq:norm_bound}
		\|\mathcal{E}(f)\|_{B^{s}_{p,q}(\mathcal{X})}
		\le
		c_{n,d,p,\mathcal{X}}
		\,
		\|f\|_{B^{s - (n-d)/p}(\mathbb{R}^d)}
	\end{equation}
	for each $f\in B^{s - (n-d)/p}_{p,q}(\mathcal{X})$. Moreover, $\mathcal{E}$ is a left-inverse of the restriction map. Therefore, for each $f\in B^{s}_{p,q}(\mathcal{X})$ and every $\varepsilon>0$, Lemma~\ref{lem:BesovRep_Cube} guarantees that there is a Res-KAN $\hat{f}$ satisfying
	\begin{equation}
		\label{eq:extended_approximation}
		\|\mathcal{E}(f)|_{[0,1]^d}-\hat{f}|_{[0,1]^d}\|_{B^{\alpha}_{p,q}([0,1]^d)}
		<
		\varepsilon
		.
	\end{equation}
	Since the restriction operators from Besov spaces on $\mathbb{R}^d$ to $(\varepsilon,\delta)$-domains are bounded linear operators; then
	there is a constant $C_{p,q,\alpha,d}>0$ such that: for every $g\in B^{\alpha}_{p,q}([0,1]^d)$, in particular for $g=\mathcal{E}(f)-\hat{f}$, we have
	\begin{equation}
		\label{eq:extension_again}
		\|g\|_{B^{\alpha}_{p,q}(\mathbb{R}^d)}
		\le
		C_{p,q,\alpha,d}
		\|g|_{[0,1]^d}\|_{B^{\alpha}_{p,q}([0,1]^d)}
		.
	\end{equation}
	By the restriction theorem in~\cite[Theorem 4.1.]{Ihnatseyva_AhlforsRegularJFA_2013}, since $\hat{f}\in B_{p,q}^{\alpha}(\mathbb{R}^d)$ then its restriction $\hat{f}|_{\mathcal{X}}$ is a well-defined element of $B^{\alpha - (n-d)/p}(\mathbb{R}^d)$.
	We may therefore, comfortably combine~\eqref{eq:norm_bound} and~\eqref{eq:extension_again} with~\eqref{eq:extended_approximation} to obtain
	\allowdisplaybreaks
	\begin{align*}
		\|f-\hat{f}|_{\mathcal{X}}\|_{B^{\alpha - (n-d)/p}(\mathbb{R}^d)}
		 & \le
		c_{n,d,p,\mathcal{X}}
		\,
		\|\mathcal{E}(f)-\hat{f}\|_{B^{\alpha - (n-d)/p}(\mathbb{R}^d)}
		\\
		 & \le
		c_{n,d,p,\mathcal{X}}
		C_{p,q,\alpha,d}
		\,
		\|\mathcal{E}(f)|_{[0,1]^d}-\hat{f}|_{[0,1]^d}\|_{B^{\alpha}_{p,q}([0,1]^d)}
		\\
		 &
		\le
		c_{n,d,p,\mathcal{X}}
		C_{p,q,\alpha,d}
		\,
		\varepsilon
		.
	\end{align*}
	Relabeling the constant $\varepsilon>0$ as $\varepsilon/(c_{n,d,p,\mathcal{X}} C_{p,q,\alpha,d})$ yields the conclusion.
\end{proof}
We now obtain our main approximation theorem.
\begin{proof}[{Proof of Theorem~\ref{thrm:Main_Approximation}}]
	Together Lemmata~\ref{lem:BesovRep_Cube__eps_delt_case} and~\ref{lem:BesovApprox_on_Fractal} now imply Theorem~\ref{thrm:Main_Approximation}.
\end{proof}

\section{{Proof of Generalization Bound}}
\label{s:Proof_Generalization}

Before proving our main results, we recall some technical definitions from learning theory on which our proof rests.
\subsection{Definitions for our Learning Theoretic Guarantees}
Our analysis in this section relies on the following dimensions from classical learning theory.
\begin{definition}[Growth function, VC-dimension, Shattering]
	Let $\mathcal{H}$ denote a class of functions from $\mathcal{X}$ to $\{0,1\}$ (the hypotheses, or the classification rules). For any non-negative integer $m$, we define the growth function of $\mathcal{H}$ as
	$$
		\Pi_{\mathcal{H}}(m) \eqdef \max_{x_1, \ldots, x_m \in \mathcal{X}} \left| \{ (h(x_1), \ldots, h(x_m)) : h \in \mathcal{H} \} \right|.
	$$
	If $\left| \{ (h(x_1), \ldots, h(x_m)) : h \in \mathcal{H} \} \right| = 2^m$, we say $\mathcal{H}$ shatters the set $\{x_1, \ldots, x_m\}$. The Vapnik-Chervonenkis dimension of $\mathcal{H}$, denoted $\operatorname{VCdim}(\mathcal{H})$, is the size of the largest shattered set, \ie, the largest $m$ such that $\Pi_{\mathcal{H}}(m) = 2^m$. If there is no largest $m$, we define $\operatorname{VCdim}(\mathcal{H}) = \infty$.
\end{definition}
\begin{definition}[Pseudodimension]
	Let $\mathcal{H}$ be a class of functions from $\mathcal{X}$ to $\mathbb{R}$. The pseudodimension of $\mathcal{H}$, written $\operatorname{Pdim}(\mathcal{H})$, is the largest integer $m$ for which there is an
	$(x_i)_{i=1}^m \oplus (y_i)_{i=1}^m \in \mathcal{X}^m \times \mathbb{R}^m$ satisfying: for any $(b_1, \ldots, b_m) \in \{0,1\}^m$ there exists $h \in \mathcal{H}$ such that
	$$
		\forall i : h(x_i) > y_i \iff b_i = 1
	$$
\end{definition}
The pseudodimension is not scale sensitive. This is not the case for the $\gamma$-fat shattering dimension defined as follows.
\begin{definition}[$\gamma$-Fat Shattering and $\gamma$-Fat Shattering Dimension]
	Let $\mathcal{H} \subseteq [0,1]^{\mathcal{X}}$. We say that $\mathcal{H}$ $P_{\gamma}$-shatters a set $X = \{x_1, \ldots, x_n\} \subseteq \mathcal{X}$ if there exist $s_1, \ldots, s_n$ such that, for all $E \subseteq X$, there is an $h \in \mathcal{H}$ satisfying
	\[
		\forall x_i \in E, \quad h(x_i) \geq s_i + \gamma
	\]
	\[
		\forall x_i \in X - E, \quad h(x_i) \leq s_i - \gamma
	\]
	\textit{The fat shattering dimension of $\mathcal{H}$ at scale $\gamma$, denoted by $\operatorname{Pdim}_{\gamma}(\mathcal{H})$, is the size of a largest $P_{\gamma}$-shattered set.}
\end{definition}
We may now prove our main statistical guarantee.

\subsection{The Proof}

We now prove our learning guarantees, beginning with Lemma~\ref{lem:PseudoDimBound}.
Thus, we must first recall the following result relating the pseudodimension of a set of binary classifiers implemented by the neural network to the VC-dimension of a modification of that class with two extra computational units. We emphasize that the following result holds for general feedforward neural networks (\ie, neural networks given by a connected directed acyclic graph on which a computation is executed on every node, which is neither initial (input node) nor terminal (output node)); see \eg,~\cite{karpinski1997polynomial} for a clean formulation.

\begin{lemma}[Fat-Shattering Dimension Bound for KANs]
	\label{lem:PseudoDimBound}
	Let $\alpha>0$, $d,L,W,I\in \mathbb{N}_+$, with $\lceil\alpha \rceil \le I$. Then, for every $\gamma>0$
	\[
		\operatorname{Pdim}_{\gamma}\big(\operatorname{Res-KAN}_{L,W}^{I,\alpha}(\mathbb{R}^d,\mathbb{R})\big)
		\in
		\mathcal{O}\left(
		L^2\,W^2\,(I+3-\lceil \alpha\rceil)
		\big(
		\log\big(
		L\,W^2\,(I+3-\lceil \alpha\rceil)
		\big) + L
		\big)
		\right)
		.
	\]
\end{lemma}

\begin{proof}
	First, the smoothness condition in~\eqref{eq:DEF_SRKs__sparsity} implies that each KAN-neuron in~\eqref{eq:KAN_r} can be represented as a feedforward network with computational graph
	\[
		\hat{G}=(\{z_0,z_2\}\cup \{z_{1:i}\}_{i=0}^{I+3-\lceil \alpha\rceil }, \{\{z_0,z_{1:i}\}_{i=0}^{I+3-\lceil \alpha\rceil}\cup \{z_{1:i},z_2\}_{i=0}^{I+3-\lceil \alpha\rceil}\}
	\]
	with input node $z_0$, output node $z_2$, and for each computational node $z_{1:0},\dots,z_{1:I+3-\lceil \alpha\rceil }$ we have
	\[
		z_{1:i} = \beta_i p_i(x)
	\]
	where $p_i\eqdef \mathcal{N}_i$ (viewed as a piecewise polynomial of degree $I+1$) with, of course, no more than $I+1$ breakpoints.

	Each fully-connected Res-KAN layer, as defined in~\eqref{eq:norm_res_KAN_layers}, can be represented as a feedforward neural network with $2$ layers (including the input and output layers), and at most $2d_{out}(d_{in}+1)(2(I+3-\lceil \alpha\rceil)$ non-zero parameters, and $(2(I+3-\lceil \alpha\rceil)d_{out}$ computational units. Consequently, every $\hat{f}\in \operatorname{Res-KAN}_{L,W}^{I,\alpha}(\mathbb{R}^d,\mathbb{R})$ can be represented as a feedforward neural network at most $W^{\prime}\eqdef L2d_{out}(d_{in}+1)(2(I+3-\lceil \alpha\rceil)$ non-zero parameters and at most $L(2(I+3-\lceil \alpha\rceil)d_{out}$ computational units, arranged into at most $L^{\prime}\eqdef 2L$ layers.

	Following~\cite[Theorem 14.1]{anthony2009neuralBook}, for every $\hat{f}\in \operatorname{Res-KAN}_{L,W}^{I,\alpha}(\mathbb{R}^d,\mathbb{R})$ let $\tilde{f}:\mathbb{R}^d\to \{0,1\}$ defined by modifying the computational graph of $\hat{f}$ as follows:
	We added one extra input unit and one extra computation unit. This additional computation unit is a $I_{[0,\infty)}$ (heavyside activation function) unit receiving input only from the output unit of $\hat{f}$ and from the new input unit, and it is the output unit of $\tilde{f}$. Let $\mathcal{F}$ consist of all functions constructed in this manner by modifying some $\hat{f}\in \operatorname{Res-KAN}_{L,W}^{I,\alpha}(\mathbb{R}^d,\mathbb{R})$ in this way.

	Thus,~\cite[Theorem 2.1]{bartlett1998almost} implies that the VC-dimension of $\operatorname{Res-KAN}_{L,W}^{I,\alpha}(\mathbb{R}^d,\mathbb{R})$ is at most
	\begin{equation}
		\label{eq:Bartlett_VC_PWPoly}
		\operatorname{VCdim}(
		\mathcal{F}
		)
		\in
		\mathcal{O}(W^{\prime}L^{\prime} \log W^{\prime} + W^{\prime}(L^{\prime})^2)
		.
	\end{equation}
	Now~\cite[Theorem 14.1]{anthony2009neuralBook} implies that the pseudodimension $\operatorname{Pdim}(\operatorname{Res-KAN}_{L,W}^{I,\alpha}(\mathbb{R}^d,\mathbb{R}))$ satisfies
	\begin{equation}
		\label{eq:PDim_VC}
		\operatorname{Pdim}\big(\operatorname{Res-KAN}_{L,W}^{I,\alpha}(\mathbb{R}^d,\mathbb{R})\big)
		\lesssim
		\operatorname{VCdim}(\mathcal{F})
		.
	\end{equation}
	Combining~\eqref{eq:Bartlett_VC_PWPoly} with~\eqref{eq:PDim_VC} yields
	\begin{equation}
		\label{eq:pseudo__bound}
		\begin{aligned}
			\operatorname{Pdim}\big(\operatorname{Res-KAN}_{L,W}^{I,\alpha}(\mathbb{R}^d,\mathbb{R})\big)
			 & \in
			\mathcal{O}(W^{\prime}L^{\prime} \log W^{\prime} + W^{\prime}(L^{\prime})^2)
			\\
			 & \in
			\mathcal{O}\left(
			L^2\,W^2\,(I+3-\lceil \alpha\rceil)
			\big(
			\log\big(
			L\,W^2\,(I+3-\lceil \alpha\rceil)
			\big) + L
			\big)
			\right)
			.
		\end{aligned}
	\end{equation}
	Now, by~\cite[Theorem 11.13 (i)]{anthony2009neuralBook}, for every $\gamma>0$ we have
	\begin{equation}
		\label{eq:fat_pseudo}
		\operatorname{Pdim}_{\gamma}\big(\operatorname{Res-KAN}_{L,W}^{I,\alpha}(\mathbb{R}^d,\mathbb{R})\big)
		\le
		\operatorname{Pdim}\big(\operatorname{Res-KAN}_{L,W}^{I,\alpha}(\mathbb{R}^d,\mathbb{R})\big)
		.
	\end{equation}
	We obtain our conclusion upon combining~\eqref{eq:pseudo__bound} with~\eqref{eq:fat_pseudo}.
\end{proof}

Consider the hypothesis class $ \mathcal{H}_{W,L}^{\alpha,I,p,q}(\mathcal{X})$ consisting of all maps $h_{\hat{f},f}:\mathcal{X}
{\times \mathbb{R}}
\to \mathbb{R}$ for which there exist some $\hat{f}\in \operatorname{Res-KAN}_{L,W}^{I,\alpha}(\mathbb{R}^d,\mathbb{R})$ and some $f\in B_{p,q}^\alpha(\mathcal{X})$ with $\|f\|_{B_{p,q}^{\alpha}(\mathcal{X})}\le 1$ such that: for every $x\in \mathcal{X}$ we have
\begin{equation}
	\label{eq:difference_class}
	h_{\hat{f},f}(x
        {,y}
    )
	\eqdef
	\big|
	\hat{f}(x)
	-
    	{(}
            f(x)
                {-y}
        {)}
	\big|
	.
\end{equation}
We henceforth denote the unit ball in $B_{p,q}^{\alpha}(\mathcal{X})$ by $B_{p,q}^\alpha(\mathcal{X})_1$. Our next result bounds the fat-shattering dimension of this ``regression error'' hypothesis class $\mathcal{H}_{W,L}^{\alpha,I,p,q}(\mathcal{X})$ in terms of the number of the: degree, regularity parameters $I$ and $\alpha$, as well as the depth and width $L$ and $W$ of our hypothesis class, together with some added constraints on the regularity of the target function in the Besov space which we are learning.


\begin{lemma}[{Fat-Shattering Dimension of the ``Regression-Error'' Class $\mathcal{H}_{W,L}^{\alpha,I,p,q}(\mathcal{X})$}]
	\label{lem:fat_shattering__classH}
	Suppose that $\mathcal{X}$ is a Lipschitz domain, let $1 \le \tau \le \infty$, $1 \le p, q \le \infty$ and $\alpha > \left(
        {(}
            d
        {+1)}
    \left(1/p - 1/\tau\right)\right)_+$ and let $L,I,W\in \mathbb{N}_+$.
	For 
    every $\gamma>0$ we have
	\[
		\operatorname{Pdim}_{\gamma}(\mathcal{H}_{W,L}^{\alpha,I,p,q}(\mathcal{X}))
		\lesssim
		\log_2(1/(8\gamma))^2
		r\, 
		L^2\,W^2
		\big(
		\log\big(
			r
			L\,W^2
			\big) + L
		\big)
		\Big)
		+
		(8\gamma)^{-
            {(}
                d
            {+1)}
        /\alpha}
	\]
	where $r\eqdef I+3-\lceil \alpha\rceil$.
\end{lemma}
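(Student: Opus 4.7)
I would view the hypothesis class $\mathcal{H}=\mathcal{H}_{W,L}^{\alpha,I,p,q}(\mathcal{X})$ as the image of the product $\mathcal{R}\times\mathcal{B}$ under the $1$-Lipschitz map $(\hat f, f)\mapsto |\hat f - f|$, where $\mathcal{R}\eqdef\operatorname{Res-KAN}_{L,W}^{I,\alpha}(\mathbb{R}^d,\mathbb{R})$ and $\mathcal{B}$ is the unit ball of $B^{\alpha}_{p,q}(\mathcal{X})$, and then bound its $\gamma$-fat shattering dimension by pairing an empirical covering estimate for the Res-KAN factor (coming from Lemma~\ref{lemma:PseudoDimBound}) with a Besov-entropy estimate for the Besov factor.

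\textbf{Step 1: shattering-to-covering reduction.} Suppose $\mathcal{H}$ $\gamma$-shatters a sample $S=\{x_1,\dots,x_n\}$ with $2^n$ witness pairs $(\hat f_E,f_E)_{E\subseteq S}$. The restrictions $h_E|_S\eqdef |\hat f_E-f_E|\big|_S\in\mathbb{R}^n$ are pairwise $2\gamma$-separated in $\ell^\infty(S)$ by the definition of fat-shattering. Since $z\mapsto|z|$ is $1$-Lipschitz, two witnesses whose Res-KAN and Besov coordinates on $S$ each lie within $\eta$ of one another in $\ell^\infty(S)$ must give $h$-images within $2\eta$ of one another. Choosing $\eta<\gamma$ (say $\eta=\gamma/4$) therefore forces each product cell of the two $\eta$-covers of $\mathcal{R}|_S$ and $\mathcal{B}|_S$ to contain at most one witness, so
\[
    2^n \;\le\; \mathcal{N}\bigl(\gamma/4,\mathcal{R}|_S,\ell^\infty(S)\bigr)\cdot \mathcal{N}\bigl(\gamma/4,\mathcal{B}|_S,\ell^\infty(S)\bigr).
\]

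\textbf{Step 2: control of each factor.} The Alon-Ben-David-Cesa-Bianchi-Haussler entropy theorem (see e.g.\ \cite[Theorem~12.8]{anthony2009neuralBook}) bounds the empirical $\ell^\infty$-covering number of a bounded real-valued class by its $\gamma$-fat shattering dimension times a factor $\log_2^2(1/\gamma)$; combined with Lemma~\ref{lemma:PseudoDimBound} this yields
\[
    \log_2 \mathcal{N}\bigl(\gamma/4,\mathcal{R}|_S,\ell^\infty(S)\bigr)
    \;\lesssim\;
    \log_2^2\bigl(1/(8\gamma)\bigr)\, r\,L^2W^2\bigl(\log(rLW^2)+L\bigr),
\]
producing the first summand of the stated bound. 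For the Besov factor I would invoke the classical Birman-Solomyak / Edmunds-Triebel entropy estimate on Besov unit balls over Lipschitz domains, which, under the embedding hypothesis $\alpha>(d(1/p-1/\tau))_+$, gives $\log_2 \mathcal{N}(\gamma/4,\mathcal{B},L^\infty(\mathcal{X}))\lesssim (8\gamma)^{-d/\alpha}$; restricting to the sample only shrinks the cover, so the second summand of the stated bound follows. Taking $\log_2$ of the inequality in Step~1, substituting the two entropy estimates, and supremizing over the size $n$ of a shattered set gives the claim.

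\textbf{Main obstacle.} The delicate point is the Besov-side entropy estimate: the classical form lives in $L^\tau$ rather than pointwise, yet fat-shattering requires pointwise control on the finite set $S$. In the strong regime $\alpha>d/p$ the embedding $B^{\alpha}_{p,q}(\mathcal{X})\hookrightarrow C(\mathcal{X})$ handles this directly; in the broader regime $\alpha>(d(1/p-1/\tau))_+$ with $\tau<\infty$ one must reduce to empirical $\ell^\infty$-entropy by comparing $L^\tau$- and $\ell^\infty$-norms against counting measure on $S$, which absorbs a constant into the $8\gamma$-scale bookkeeping without altering the exponent $d/\alpha$. Tracking the constant $8$ through the triangle inequality and the dyadic rescaling of covers is routine but tedious; the conceptual core of the argument is the two independent entropy estimates in Step~2.
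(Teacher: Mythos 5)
Your proposal follows essentially the same route as the paper's own proof: decompose $\mathcal{H}$ into a Res-KAN factor and a Besov factor via the triangle inequality, control the Res-KAN factor by passing from fat-shattering (Lemma~\ref{lemma:PseudoDimBound}) to a covering number via a Bartlett--Kulkarni--Posner-type inequality (your cited ABCH theorem is interchangeable here), control the Besov factor by a classical entropy estimate for the Besov unit ball with exponent $-d/\alpha$, and then close the loop with a packing argument. Your Step~1 is simply an explicit restatement of the lower-bound direction of that covering/fat-shattering equivalence that the paper obtains by citing \cite[Theorem 2]{BartlettKulkarniPosner_CoveringFatShattering_1997} in its inequality~\eqref{eq:covering_fat_shattering__inequalities}; it is not a genuinely different argument, just unpacked by hand. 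One substantive point worth keeping: your ``main obstacle'' remark about the Besov entropy estimate living in $L^\tau$ rather than in a norm compatible with pointwise restriction to a finite sample is a legitimate concern that the paper also does not address carefully---the paper works with covering numbers in $C(\mathcal{X})$ throughout, which tacitly presupposes the embedding $B^{\alpha}_{p,q}(\mathcal{X})\hookrightarrow C(\mathcal{X})$ (i.e.\ $\alpha>d/p$) rather than the weaker hypothesis $\alpha>\big(d(1/p-1/\tau)\big)_+$ actually stated. Your proposed fix via the counting-measure comparison is gestured at rather than carried out, and as stated it would introduce an $n$-dependent factor from $\ell^\infty(S)\le n^{1/\tau}\,\ell^\tau(S)$ that you would need to argue away; either restrict to the regime $\alpha>d/p$ where the $C(\mathcal{X})$-embedding holds, or track that factor explicitly.
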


\begin{proof}[{Proof of Theorem~\ref{lem:fat_shattering__classH}}]
	Abbreviate $\mathcal{H}\eqdef \mathcal{H}_{W,L}^{\alpha,I,p,q}(\mathcal{X})$.
	For any $A\subseteq C(\mathcal{X})$ and each $\varepsilon>0$ let $\mathcal{N}(\epsilon,A)$ denote the $\epsilon$-covering number of $A$ in $C(\mathcal{X})$ (with the uniform norm). Then, for every $\varepsilon>0$, the definition of $\mathcal{H}^{\prime}$ and the triangle inequality implies that
	\begin{equation}
		\label{eq:doubling_bounds}
		\mathcal{N}(\epsilon,\mathcal{H}^{\prime})
		\le
		\mathcal{N}\big(\epsilon/2,
		\operatorname{Res-KAN}_{L,W}^{I,\alpha}(\mathbb{R}^d,\mathbb{R})
		\big)
		\mathcal{N}\big(\epsilon/2,
		B_{p,q}^\alpha(\mathcal{X}
            {\times \mathbb{R}}
        )_1
		\big)
		.
	\end{equation}
	Taking logarithmic across~\eqref{eq:doubling_bounds} we find that
	\begin{equation}
		\label{eq:doubling_bounds__log}
		\log_2(\mathcal{N}(\epsilon,\mathcal{H}))
		\le
		\log_2\Big(
		\mathcal{N}\big(\epsilon/2,
			\operatorname{Res-KAN}_{L,W}^{I,\alpha}(\mathbb{R}^d,\mathbb{R})
			\big)
		\Big)
		+
		\log_2\Big(
		\mathcal{N}\big(\epsilon/2,
			B_{p,q}^\alpha(\mathcal{X}
                {\times \mathbb{R}}
            )_1
			\big)
		\Big)
		.
	\end{equation}
	By~\cite[Theorem 2.7.4]{van1996weak}, since $1 \le \tau \le \infty$, $1 \le p, q \le \infty$ and $\alpha > \left(
    {(}
        d
    {+1)}
    \left(1/p - 1/\tau\right)\right)_+$ then, we may bound $\log_2\Big(
		\mathcal{N}\big(\epsilon/2,
			B_{p,q}^\alpha(\mathcal{X}
            {\times \mathbb{R}}
            )
			\big)
		\Big)$ above by $\mathcal{O}\big((\varepsilon/2)^{-
         {(}
            d
        {+1)}
        /\alpha}\big)$.
	Consequently,~\eqref{eq:doubling_bounds__log} can be further bounded-above by
	\begin{equation}
		\label{eq:doubling_bounds__log-1}
		\log_2(\mathcal{N}(\epsilon,\mathcal{H}))
		\lesssim
		\log_2\Big(
		\mathcal{N}\big(\epsilon/2,
			\operatorname{Res-KAN}_{L,W}^{I,\alpha}(\mathbb{R}^d,\mathbb{R})
			\big)
		\Big)
		+
		\varepsilon^{-
         {(}
            d
        {+1)}
        /\alpha}
		.
	\end{equation}
	Now, applying~\cite[Theorem 2]{BartlettKulkarniPosner_CoveringFatShattering_1997} we may bound the $\log_2$-covering number of
	\[
		\log_2\Big(
		\mathcal{N}\big(\epsilon/2,
			\operatorname{Res-KAN}_{L,W}^{I,\alpha}(\mathbb{R}^d,\mathbb{R})
			\big)
		\Big)
	\]
	above by its $c_2\varepsilon$-fat shattering dimension; for some absolute constant $c_2>0$ and an additional multiplicative factor of $\log_2(1/\varepsilon)^2$; that is
	\begin{equation}
		\label{eq:order_bound}
		\begin{aligned}
			\log_2\Big(
			\mathcal{N}\big(\epsilon/2,
				\operatorname{Res-KAN}_{L,W}^{I,\alpha}(\mathbb{R}^d,\mathbb{R})
				\big)
			\Big)
			 & \lesssim
			\operatorname{Pdim}_{c_2\,\varepsilon/2}\big(\operatorname{Res-KAN}_{L,W}^{I,\alpha}(\mathbb{R}^d,\mathbb{R})\big)
			\log(2/(c_2\varepsilon))^2
			\\
			 &
			\in
			\mathcal{O}\Big(
			\log_2(1/\varepsilon)^2
			L^2\,W^2\,(I+3-\lceil \alpha\rceil)
			\\
			\qquad\qquad\qquad\qquad
			 &
			\times
			\big(
			\log\big(
			L\,W^2\,(I+3-\lceil \alpha\rceil)
			\big) + L
			\big)
			\Big)
		\end{aligned}
	\end{equation}
	where the order-estimate on the right-hand side of~\eqref{eq:order_bound} follows from Lemma~\ref{lem:PseudoDimBound}. Incorporating~\eqref{eq:order_bound} into the right-hand side of~\eqref{eq:doubling_bounds__log-1} yields
	\begin{equation}
		\label{eq:doubling_bounds__log-2}
		\log_2(\mathcal{N}(\epsilon,\mathcal{H}))
		\lesssim
		\log_2(1/\varepsilon)^2
		L^2\,W^2\,(I+3-\lceil \alpha\rceil)
		\big(
		\log\big(
		L\,W^2\,(I+3-\lceil \alpha\rceil)
		\big) + L
		\big)
		\Big)
		+
		\varepsilon^{-
            {(}
                d
            {+1)}
        /\alpha}
		.
	\end{equation}
	Now, applying~\cite[Theorem 2]{BartlettKulkarniPosner_CoveringFatShattering_1997} again, we have that
	\begin{equation}
		\label{eq:covering_fat_shattering__inequalities}
		\operatorname{Pdim}_{\epsilon/8}(\mathcal {H})
		\le
		\max_{P}\log_2(\mathcal N(\epsilon,\mathcal{H},\mathcal L_1(dP)))
		\lesssim
		\log_2(\mathcal{N}(\epsilon,\mathcal{H}))
	\end{equation}
	where $N(\epsilon,\mathcal{H},\mathcal L_1(dP))$ is the $\epsilon$-covering number of $\mathcal{H}$ with respect to the norm on $\mathbb{E}_{X\sim \mathbb{P}}[\|X\|]$.
	Upon using~\eqref{eq:doubling_bounds__log-2} to bound the right-hand side of~\eqref{eq:covering_fat_shattering__inequalities} and then relabelling $\gamma=\varepsilon/8$, we deduce our conclusion.
\end{proof}

Having bounded the fat-shattering dimension of our ``regression error'' hypothesis class $\mathcal{H}_{W,L}^{\alpha,I,p,q}(\mathcal{X})$, we may obtain the conclusion of Theorem~\ref{thm:Main_Generalization} by appealing to one of the main results of~\cite{AlonBenDavidCesaBianchiHaussler_PDGeneralization}.

\begin{proof}[{Proof of Theorem~\ref{thm:Main_Generalization}}]
	We again abbreviate $\mathcal {H}\eqdef \mathcal{H}_{W,L}^{\alpha,I,p,q}(\mathcal{X})$.
	By~\cite[Theorem 3.6]{AlonBenDavidCesaBianchiHaussler_PDGeneralization}, we have that: for every error size $\varepsilon>0$ and each failure probability $0<\delta \le 1$ the following holds
	\begin{equation}
		\label{eq:AlonBenDavidBound}
		\mathbb{P}\Biggl(
		\sup_{h_{f,\hat{f}}\in \mathcal{H}}
		\,
		\Big|
		\mathbb{E}_{X\sim \mathbb{P}_{\operatorname{smpl}}}(h_{f,\hat{f}}(X))
		-
		\frac1{N}\sum_{n=1}^N\,h_{f,\hat{f}}(X_n)
		\Big|
		\le
		\varepsilon
		\Biggr)
		\ge
		1-\delta
	\end{equation}
	provided that
	\begin{equation}
		\label{eq:PseudoDimensionSize}
		N
		\le
		c_1\left( \frac{1}{\epsilon^2} \left(
			\operatorname{Pdim}_{\epsilon/32}(\mathcal {H})
			\ln^2\left(\frac{
					\operatorname{Pdim}_{\epsilon/32}(\mathcal {H})
				}{\epsilon}\right) + \ln\left(\frac{1}{\delta}\right) \right) \right)
	\end{equation}
	for some absolute constant $c_1>1$. Consequently, if we set the failure probability, $\delta$, to be
	\begin{equation}
		\label{eq:FromSampleSize_to_FailureProbability}
		\delta
		\eqdef
		\exp\Bigl(
		-\tfrac{N\,\epsilon^{2}}{c_{1}}
		+
		\operatorname{Pdim}_{\epsilon/32}(\mathcal{H})
		\,
		\ln^{2}
		\Big(
		\tfrac{\operatorname{Pdim}_{\epsilon/32}(\mathcal{H})}{\epsilon}
		\Big)
		\Bigr)
		.
	\end{equation}
	Note that, we may use the upper-bound for the $\varepsilon/32$-fat shattering dimension computed in Lemma~\ref{lem:fat_shattering__classH}; namely,
	\begin{equation}
		\label{eq:PDim}
		\operatorname{Pdim}_{\epsilon/32}(\mathcal{H})
		\lesssim
		\log_2(4/\epsilon)^2
		r\, 
		L^2\,W^2
		\big(
		\log\big(
			r
			L\,W^2
			\big) + L
		\big)
		\Big)
		+
		(4/\epsilon)^{d/\alpha}
	\end{equation}
	where, as before, $r\eqdef (I+3-\lceil \alpha\rceil)$.
	Therefore such that~\eqref{eq:AlonBenDavidBound} holds if
	$N\le N_{\epsilon,\delta}^{\star}$; where
	\allowdisplaybreaks
	\begin{align}
		N_{\epsilon,\delta}^{\star} & \eqdef\frac{c_1}{\epsilon^2}\Bigl[\,(A+B)\,\ln^2\!\bigl(\tfrac{A+B}{\epsilon}\bigr)
			\;+\;\ln\!\bigl(\tfrac{1}{\delta}\bigr)\Bigr]
		\\
		A                           & \eqdef c\,\log_2\Bigl(\tfrac{4}{\epsilon}\Bigr)^{2}\,r\,L^2\,W^2\bigl(\ln(rLW^2)+L\bigr)\mbox{ and }
		B \eqdef (4/\epsilon)^{
            {(}
                d
            {+1)}
        /\alpha}
	\end{align}
	for some absolute constant $c>0$. Consequently, we have that
	\[
		N
		\in
		\mathcal{O}\Bigl(
		\epsilon^{-2 
        - 
            {(}
                d
            {+1)}
        /\alpha}\,(\ln(1/\varepsilon))^2
		+\epsilon^{-2}\ln(1/\delta)
		\Bigr).
	\]
	Upon combining~\eqref{eq:AlonBenDavidBound} with~\eqref{eq:FromSampleSize_to_FailureProbability}, and observing that
	\[
		\mathcal{R}_{\mathbb{P}}(f|\hat{f})=\mathbb{E}_{X\sim \mathbb{P}_{\operatorname{smpl}}}(h_{f,\hat{f}}(X))
		\mbox{ and }
		\hat{\mathcal{R}}_{\mathbb{P}}^N(f|\hat{f})=\frac1{N}\sum_{n=1}^N(h_{f,\hat{f}})(X_n)
	\]
	we obtain our conclusion upon relabelling $c\eqdef 1/c_1$.
\end{proof}

\section{Toy Numerical Sanity Checks}
\label{s:Main_Results__ss:SanityChecks}
Since the original KAN paper~\cite{KANS_OG_2025} and its many variants~\cite{wu2025graph,zhang2025physics,rong2025recurrent} are now well-established as effective learners, we only perform brief sanity checks to validate our theory and confirm that the addition of the residual connection has a consistently positive impact on the KAN architecture. We first then verify that one can indeed learn a function in the Besov space $B^{\alpha}_{2,2}([0,1])$, but not in $B^{\alpha+1}_{2,2}([0,1])$, as well as its derivatives.
We then verified that the traditional KAN build and our mild variant with residual connections both offer similar performance.
We verify that the training loss converges relatively steadily during training.

\paragraph{A Non-Smooth Besov Function of a Specific Regularity}
The Besov space $B^{\alpha}_{2,2}([0,1])$ coincides with the Sobolev space $H^{\alpha}([0,1])$ for \textit{integer} $\alpha>0$. Thus, we may briefly validate our theoretical results for the family of functions $\{f_{\alpha}\}_{\alpha=1}^{10}$, where for each such $\alpha$ we set
\begin{equation}
	\label{eq:f_alpha}
	f_{\alpha}
	\eqdef
	\frac{x^{\alpha}\log(x)}{\alpha!}
	.
\end{equation}
We chose this example since, $f_{\alpha}$ is a classical example of a function belonging to $H^{\alpha}([0,1])$ but not to $H^{\alpha+1}([0,1])$. To see this, notice that $f_{\alpha}$ is $\alpha$-times differentiable, not only weakly, with $s^{th}$ derivative bounded below near $0$ by
\[
	\frac{d^{{\alpha}+1}}{dx^{{\alpha}+1}} x^{\alpha} \log(x) \succsim \frac{1}{x}
\]
therefore $\lim\limits_{x\downarrow 0} \frac{d^{{\alpha}+1}}{dx^{{\alpha}+1}} x^{\alpha} \log(x) =\infty$. We have chosen the normalization factor of $1/{\alpha!}$ since it is the leading coefficient of the ${\alpha-1}^{st}$ derivative of $x^{\alpha}\log(x)$. Thus, the Sobolev/Besov $\|\cdot\|_{B^{\alpha}_{2,2}([0,1])}$ norm of each $f_{\alpha}$ remains roughly on the same scale (about unity) and are indeed comparing apples-to-apples between derivative levels of $\alpha$.

\paragraph{We Do Actually Learn A Function And Its Derivatives?}
Next, we verify that Theorems~\ref{thrm:Main_Approximation} and~\ref{thm:Main_Generalization} are indeed reflected in practice; namely, that we can both approximate a function and its higher (weak) derivatives from training data. By the Poincaré \'{e}-inequality, it suffices to compare the mean squared error (MSE) between the $s^{th}$ derivative of our prediction and the target data on $[0,1]$. Iterating the Poincaré-inequality implies that we have control over the function itself and all the first $s-1$ derivatives upon controlling our MSE to the $s^{th}$ derivative; \ie
\[
	\| f \|_{L^{2} ([0,1])}
	\lesssim
	\dots
	\lesssim
	\Big\| \frac{\partial^s f}{\partial x^s} \Big\|_{L^{2} ([0,1])}
	\lesssim
	\Big\|
	\frac{\partial^{s+1} f}{\partial x^{s+1}}
	\Big\|_{L^{2} ([0,1])}
	.
\]
Note that, in practice, all derivatives are computed numerically using autograd. Thus, we train on the MSE loss augmented with the MSE between the ${\alpha}^{th}$ derivative of our Res-KAN and the target function, \ie, for a hyperparameter $0\le \lambda\le 1$ controlling our focus on higher derivatives during training, we optimize the following loss function
$$
	\operatorname{loss}_{\lambda}(\hat{f})
	\eqdef
	\frac{(1-\lambda)}{N}\sum_{n=1}^N \big(f(X_n)-\hat{f}(X_n)\big)^2
	+
	\frac{\lambda}{N}\sum_{n=1}^N \Big(\frac{\partial^s f}{\partial x^s}(X_n)-\frac{\partial^s \hat{f}}{\partial x^s}(X_n)\Big)^2
	.
$$
We allow $\lambda$ to linearly decrease from 1 to 0 during training, so that the KAN initially learns the overall function structure (encoded in its derivatives) before fine-tuning its pointwise value approximation in the final SGD iterations.
\begin{figure}[h!]
	\centering
	\begin{subfigure}[b]{0.95\textwidth}
		\includegraphics[width=\textwidth]{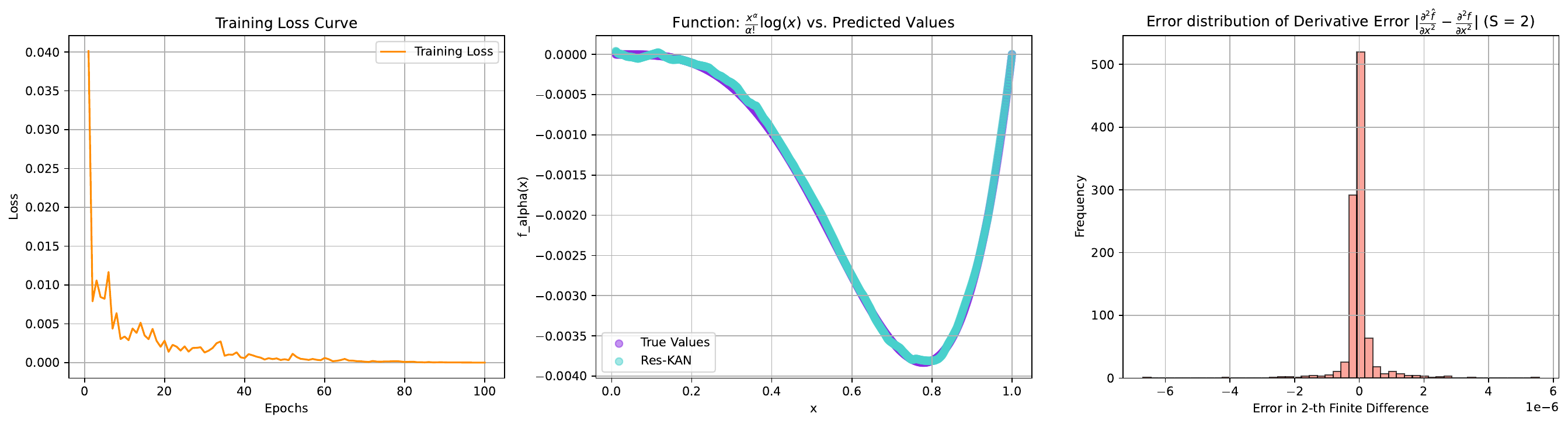}
	\end{subfigure}
	\hfill\\
	\begin{subfigure}[b]{0.95\textwidth}
		\includegraphics[width=\textwidth]{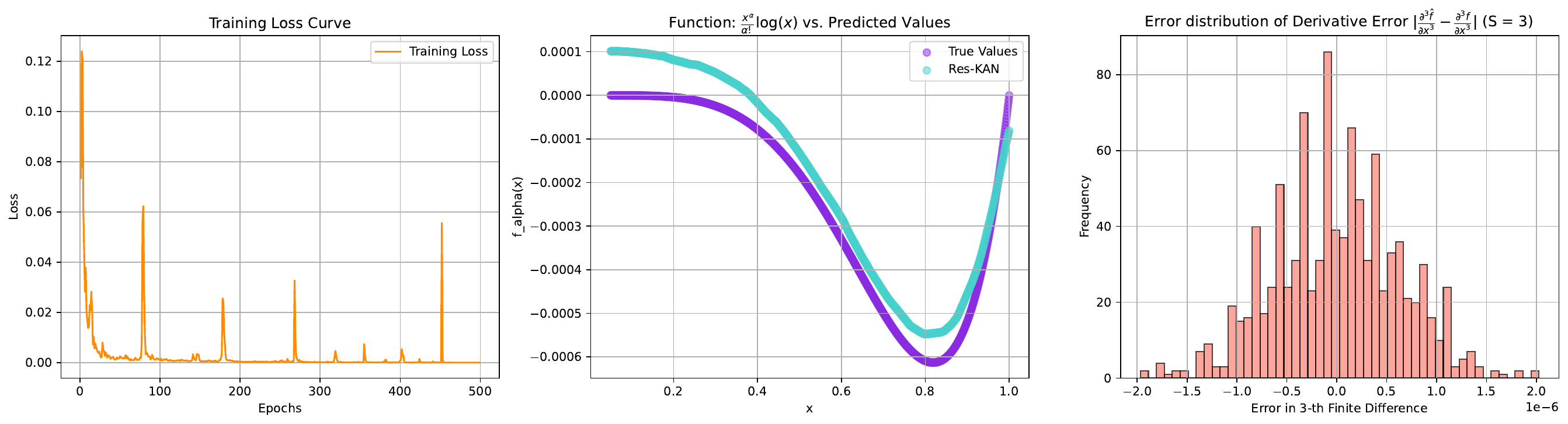}
	\end{subfigure}
	\caption{Evolution of training loss, test prediction, and derivative approximation quality when learning the functions $f_{\alpha}$ in~\eqref{eq:f_alpha} for $\alpha=3$ and $\alpha=5$.}
	\label{fig:checking_fa}
\end{figure}

\paragraph{Do The Residual Connection Adversely Impact the KAN Build?}
We now compare the KAN build with and without the additional residual connection. Our objective is to perform a quick check validating standard profound learning wisdom, that its incorporation can only benefit the model. Each model has width $200$, two hidden layers, and is trained in-sample with $10^3$ datapoints uniformly sampled on $[-2,2]$ then tested, mimicking the realizable setting of Theorem~\ref{thm:Main_Generalization}, on a grid of $10^2$ test points in $[-2,2]$. The experiment confirms that the KAN builds can capture the low-regularity structures, \eg, the spike and rapid a-periodic oscillations, and both builds offer similar performance.
The results of our experiments are plotted as a function of the integer values of $\alpha$ from $1$ to $S=30$ in Figure~\ref{fig:Sobolev__Stability}. As we see, both KAN builds with and without a residual connection offer similar predictive performance, regardless of the degree of Sobolev-Besov regularity available in the target function.

\begin{figure}[h!]
	\centering
	\includegraphics[width=0.55\linewidth]{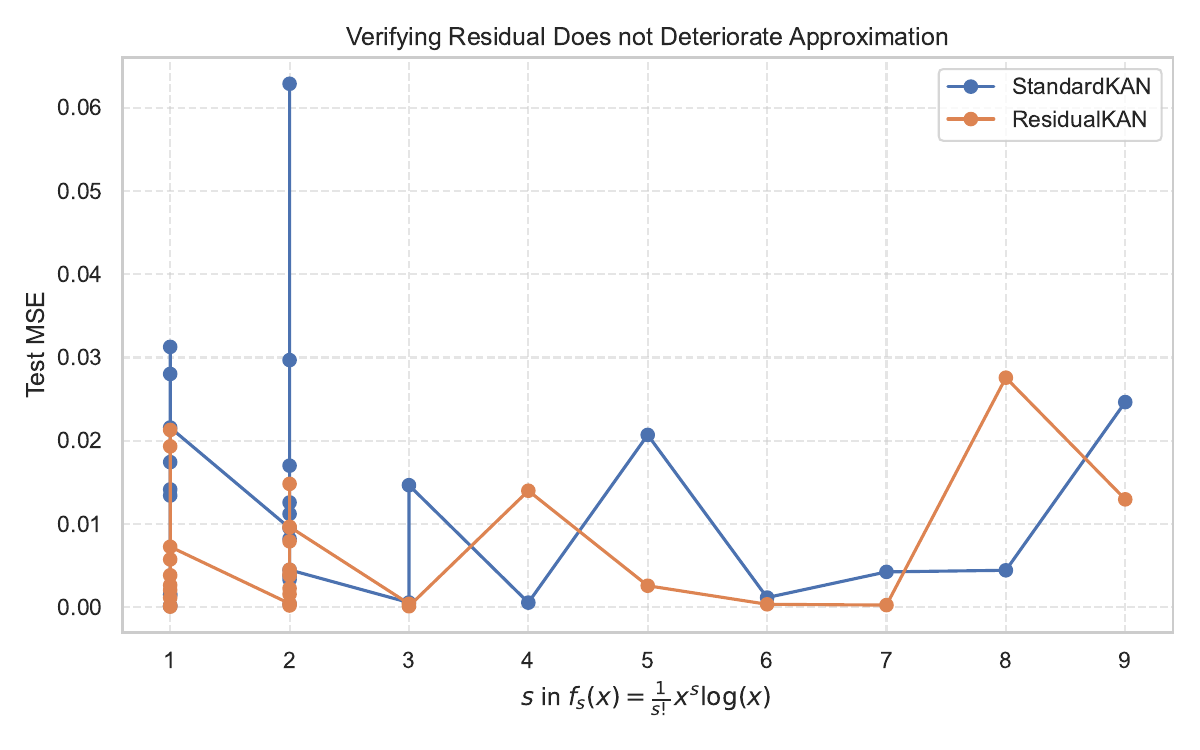}
	\caption{Both the KAN and Res-KAN build offer similar approximation efficacy across different Sobolev-Besov smoothness levels.}
	\label{fig:Sobolev__Stability}
\end{figure}

\paragraph{Functions Beyond our Guarantees}
Lastly, we briefly verify that the KAN with residual can handle pathological regression tasks. Figure~\ref{fig:three_figs} briefly considers two such cases when the target function exhibits a rapid a-periodic oscillatory, \eg, $1/\cos(x)$, or when it exhibits sharp cusps, \eg, $1/\sqrt[10]{x}$. As we see, the training loss descends, the challenging pattern can be learned to reasonable accuracy, and the higher derivatives of the Res-KAN indeed also converge to those of the pathological target function.

\begin{figure}[H]
	\centering
	\begin{subfigure}[b]{0.95\textwidth}
		\includegraphics[width=\textwidth]{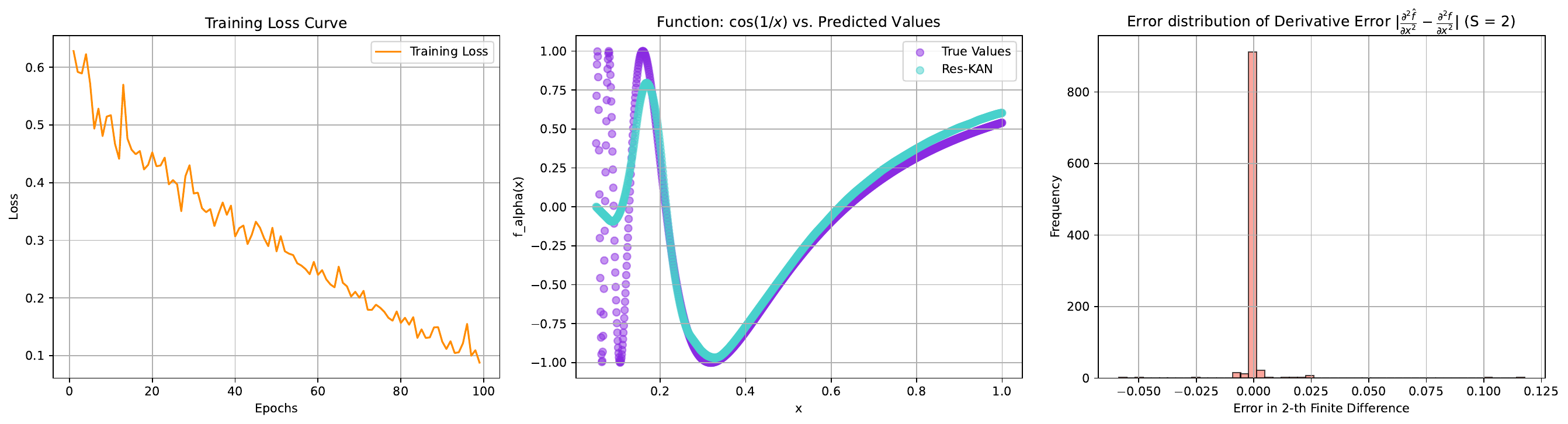}
	\end{subfigure}
	\hfill
	\begin{subfigure}[b]{0.95\textwidth}
		\includegraphics[width=\textwidth]{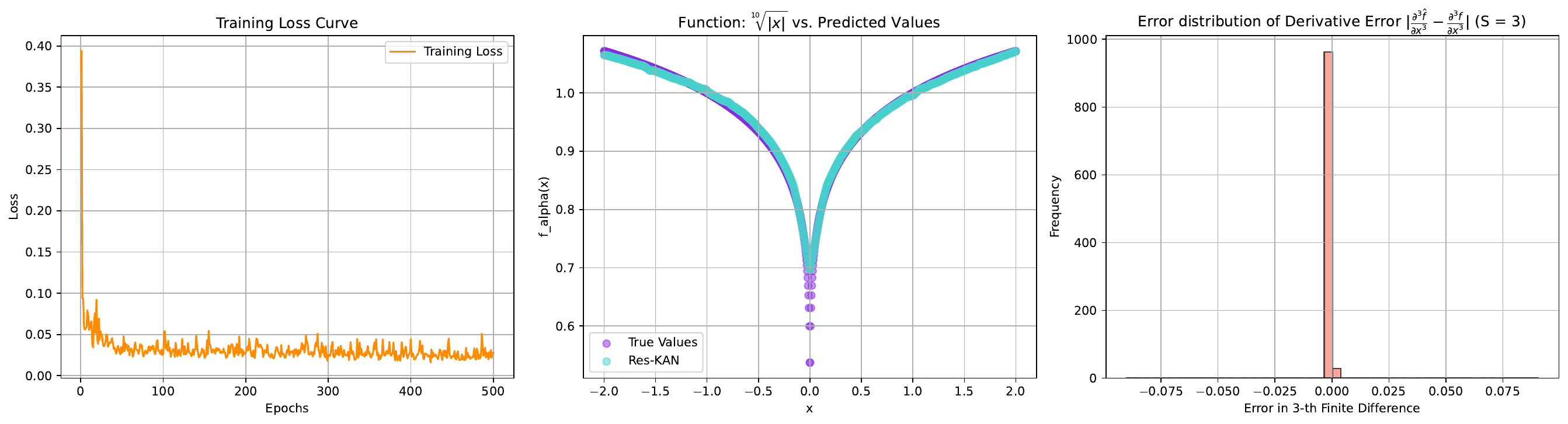}
	\end{subfigure}
	\caption{Two challenging one-dimensional functions to learn - using $\operatorname{ReLU}$ and $\operatorname{ReLU}^2$ MLPs, as well as the standard and the residual KAN builds.}
	\label{fig:three_figs}
\end{figure}

A detailed numerical investigation confirms that, for the relevant class of Besov-type functions and beyond, including a residual connection in the KAN architecture does not negatively impact performance. Consequently, practitioners who prefer implementing the standard, non-residual KAN can expect our results to transfer seamlessly to this simpler setting. Nevertheless, the residual connection may offer practical benefits, particularly in more complex regimes, by enhancing optimization stability and facilitating convergence in deeper or wider network configurations.

\bibliographystyle{acm}
\bibliography{Bookkeaping/Refs}

\end{document}